\newtheorem{theorem}{Theorem}
\newtheorem{lemma}{Lemma}
\newtheorem{definition}{Definition}
\newcommand{\norm}[1]{\left\lVert#1\right\rVert}
\newcommand{\bydef}{:=}
\begin{document}
%
\title{Misspecified Linear Bandits}
\author{Avishek Ghosh\\
University of California, Berkeley\\
California 94720 USA\\
avishek\_ghosh@berkeley.edu
\And
Sayak Ray Chowdhury\\
Indian Institute of Science\\
Bengaluru 560012 India\\
srchowdhury@ece.iisc.ernet.in
\And
Aditya Gopalan\\
Indian Institute of Science\\
Bengaluru 560012 India\\
aditya@ece.iisc.ernet.in
}

\maketitle
\begin{abstract}
We consider the problem of online learning in misspecified
linear stochastic multi-armed bandit problems. Regret guarantees for
state-of-the-art linear bandit algorithms such as Optimism in the Face
of Uncertainty Linear bandit (OFUL) hold under the assumption that the
arms’ expected rewards are perfectly linear in their features. It is,
however, of interest to investigate the impact of potential
misspecification in linear bandit models, where the expected rewards
are perturbed away from the linear subspace determined by the arms’
features. Although OFUL has recently been shown to be robust to
relatively small deviations from linearity, we show that any linear
bandit algorithm that enjoys optimal regret performance in the
perfectly linear setting (e.g., OFUL) must suffer linear regret under a
sparse additive perturbation of the linear model. In an attempt to
overcome this negative result, we define a natural class of bandit
models characterized by a non-sparse deviation from linearity. We
argue that the OFUL algorithm can fail to achieve sublinear regret
even under models that have non-sparse deviation. We finally develop
a novel bandit algorithm, comprising a hypothesis test for linearity
followed by a decision to use either the OFUL or Upper Confidence
Bound (UCB) algorithm. For perfectly linear bandit models, the
algorithm provably exhibits OFUL’s favorable regret performance, while
for misspecified models satisfying the non-sparse deviation property,
the algorithm avoids the linear regret phenomenon and falls back on
UCB’s sublinear regret scaling. Numerical experiments on synthetic
data, and on recommendation data from the public Yahoo! Learning to
Rank Challenge dataset, empirically support our findings.
\end{abstract}

\section{Introduction}
\label{sec:intro}

Stochastic multi-armed bandits have been used with significant success
to model sequential decision making and optimization problems under uncertainty, due to
their succinct expression of the exploration-exploitation
tradeoff. Regret is one of the most widely studied performance
measures for bandit problems, and it is well-known that the optimal
regret that can be achieved in an iid stochastic bandit instance
with $N$ actions, $[0,1]$-bounded rewards and $T$ rounds, without any
additional information about the reward distribution, is\footnote{The
  notation $\tilde{O}$ hides polylogarithmic factors.}
$\tilde{O}(\sqrt{NT})$. This is achieved, for instance, by the
celebrated Upper Confidence Bound (UCB) algorithm of
\cite{AuerEtAl02FiniteTime}.

The (polynomial) dependence of the regret in a standard stochastic
bandit on the number of actions $N$ can be rather prohibitive in settings
 with a very large number (and potentially infinite) of actions. Under the assumption that the
rewards from playing arms are linear functions of known features or context vectors,
 linear bandit algorithms such as LinUCB \cite{Li2010}, {\bf
  O}ptimism in the {\bf F}ace of {\bf U}ncertainty {\bf L}inear bandit
(OFUL) \cite{AbbYadPS11} and Thompson sampling \cite{Tho33} give
regret $\tilde{O}(d\sqrt{T})$ where $d$ is the feature dimension. This
is particularly attractive in practice where the feature dimension $d
\ll N$ (for instance, news article recommendation data typically has
$d$ of the order of hundreds while $N$ is $2$ or $3$ orders
higher). The framework also extends to the more general contextual
linear bandit model, where the features for arms are allowed to vary
with time \cite{ChuLiReySch2011:lincontext,agrawal2013thompson}.

The design, and attractiveness, of linear bandit algorithms hinges on the assumption
that the expected reward from playing arms are linear in their
features, i.e., under a fixed ordering of the arms, the vector of
expected rewards from all arms belongs to a known linear
subspace, spanned by the arms' features. However, real-world
environments may not necessarily conform perfectly to this linear
reward model and in fact in most cases, have large deviation
 (Section~\ref{sec:simulation} presents a case study using
a real-world dataset to this effect). One possible reason for this is
that features are often designed with careful domain expertise without
explicit regard for linearity with respect to the utilities of
actions. Another situation where linearity ma be violated is when
there is feature noise or uncertainty \cite{hainmueller2014kernel} --
even a small amount of noise in the assumed features shifts the
expected reward vector out of the linear subspace. When the rewards
need not be perfectly linear in terms of the features in
hand, it becomes important to study how robust or fragile strategies
for linear bandits can be to such misspecification.

The specific questions we address are: (a) With features available for
arms with respect to which the arms' rewards need not necessarily be
linear, how do deviations from linearity impact the performance of state-of-the-art linear bandit algorithms?  (b) Is it possible to design bandit algorithms that control for deviations from linearity and still enjoy `best-of-both-worlds' regret performance, i.e., regret that is sublinear in $T$ and depends only on the feature dimension when the model is linear (or near-linear), and that falls back on the number of arms (as for UCB) when all bets are off (i.e., the model is far from linear)?

\begin{table}[t!]
\centering
\small
\begin{tabular}{|c |c |c |c|}
\hline
Deviation from & OFUL & UCB & RLB \\ 
linearity & & & (proposed) \\
\hline
Small  & $O(d\sqrt{T})$ & $O(\sqrt{NT})$ & $O(d\sqrt{T})$ \\ \hline
Large \& non-sparse   &  $\Omega(T)$ & $O(\sqrt{NT})$ & $O(\sqrt{NT})$ \\ \hline
\end{tabular}
\caption{Regret of OFUL, UCB and the proposed algorithm (RLB) upto time horizon $T$ under different deviations. We can see that RLB avoids linear regret of OFUL for large non-sparse deviations while enjoying the favorable regret of OFUL under very small deviations.}
\label{table:regret_algo}
\end{table}

{\bf Overview of results.} The paper makes the following contributions: 
\begin{enumerate}
\item  We first prove a negative result about the robustness of linear bandit algorithms to sparse deviations from linearity (Theorem~\ref{thm:general_lower_bound}): \emph{Any} linear bandit algorithm that enjoys optimal regret guarantees on perfectly linear bandit problem instances (i.e., $O(d\sqrt{T})$ regret in dimension $d$), such as OFUL and LinUCB, must suffer linear regret on some misspecified linear bandit model. Furthermore our constructive argument shows that it is possible to find a misspecified model that differs only sparsely from a perfectly linear model -- in fact, by a perturbation of the expected reward of only a single arm. We also rule out the possibility of using a state-of-the art bandit algorithm OFUL for handling instances with large
non-sparse deviation (Theorem~\ref{lem:OFUL_large_dev}).

  \item Towards overcoming this negative result, we propose and
    analyze a novel bandit algorithm (Algorithm
    \ref{algo:proposed_scheme}) (abbreviated RLB in Table~\ref{table:regret_algo}),
     which is not only robust to non-sparse deviations
    from linearity but also retains the order-wise optimal regret
    performance in the standard linear bandit model. The algorithm
    provably achieves OFUL's $\tilde{O}(d \sqrt{T})$
    regret\footnote{Note that we concern ourselves with studying the gap-independent (worse-case over problem instances) regret; a similar exercise can be carried out in terms of the reward gap
      parameter. } in the ideal linear case, and UCB's
    $\tilde{O}(\sqrt{NT})$ regret for a broad class of reward models
    which are not linear but are well-separated from the feature
    subspace in a non-sparse sense, which we characterize (Theorem
    \ref{thm:total_regret}). The algorithm is comprised of a
    hypothesis test, followed by a decision to employ either OFUL or
    UCB. Numerical experiments on both synthetic as well as on the
    public Yahoo!  Learning to Rank Challenge data \footnote{{\tt
        {\tiny
          https://webscope.sandbox.yahoo.com/catalog.php?datatype=c}}},
    lend support to our theoretical results.
\end{enumerate}

{\bf Related work.} 
Many strategies have been devised and studied for stochastic
multi-armed bandits for the general setting without structure -- UCB
\cite{AuerEtAl02FiniteTime}, $\epsilon$-greedy
\cite{Cesa-Bianchi98finite-timeregret_epsilon_greedy}, Boltzmann
exploration \cite{sutton1998reinforcement}, Bayes-UCB
\cite{Kaufmann12onbayesian}, MOSS \cite{audibert2009minimax_moss}
and Thompson sampling \cite{Tho33,AgrawalG,Kaufmann2012}, to name a
few. Linear stochastic bandits have been extensively investigated
\cite{rusmevichientong2010linearly,DaniHayKak08,AbbYadPS11} under the
well-specified or perfectly linear reward model, achieving (near)
optimal problem-independent regret of $\tilde{O}(d\sqrt{T})$ if the
features are of dimension $d$ (note that the number of arms can in
principle unbounded). Researchers have also considered extensions of
linear-bandit algorithms for the case of rewards following a
generalized linear model with a known, nonlinear link function
\cite{Filippi08}.

In contrast to the abundance of work on linear bandits, very
little work, to the best of our knowledge, has dealt with the impact
of misspecification on stochastic decision making with partial
(bandit) feedback. A notable study is that of
\cite{besbes2015surprising} who study misspecified models in a
specific dynamic pricing setting. Working in a specialized
$2$-parameter linear reward setting, they arrive at the conclusion
that, within a small range of perturbations of the model away from
linearity, one can preserve the sublinear regret of a standard bandit
algorithm. There has been significant work, in a different vein, on
the effect of model misspecification for the classical linear
regression problem (i.e., estimation) in statistics where the metric
is overall distortion and not explicitly maximum reward -- see for
instance the work of \cite{white1981consequences} and related
references. Very recently \cite{gopalan2016low} provides some results for the linear bandit algorithm OFUL when the devation fron linearity is small. We expect to contribute towards filling a much-needed gap in the study of sensitivity properties in linearly parameterized
bandit decision-making in this work.

\section{Setup \& Preliminaries} \label{sec:system_model}

Consider a multi-armed bandit problem with $N$ arms, and a $d$-dimensional ($d \ll N$)
context or feature vector $x_i \in \mathbb{R}^d$ associated with each
arm $i$, $i = 1, \ldots, N$. An arm $i$, upon playing, yields a
stochastic and independent reward with expectation $\mu_i$. Let
$\mu^*=\max_i \mu_i$ be the best expected reward, and let $\mathcal{X}$ be the matrix having the feature vectors for each arm as its columns:
$\mathcal{X}=[x_1 \; | \; x_2 \; |\; \ldots \; | \; x_N] \in
\mathbb{R}^{d \times N}$, with $\mathcal{X}^T$ assumed to have full column rank. Define $\mu = [\mu_1 \; \mu_2 \; \ldots \;
\mu_N]^T \in \mathbb{R}^{N}$ to be the expected reward vector. 

At each time instant $t = 1, 2, \ldots$, the learner chooses
any one of the $N$ arms and observes the reward collected from that
arm. The action set for the player is
$\mathcal{A}=\lbrace1,2,\ldots,N \rbrace$.  The regret after $T$ rounds is defined to be the quantity  $R(T)=T\mu^*-\sum_{t=1}^{T} \mu_{A_t}$. The goal of the player is
to maximize the net reward, or equivalently, minimize the regret, over the course of $T$ rounds. (If the learner has exact knowledge of $\theta^*$ and $\epsilon$ beforehand, the optimal choice is to play
 a best possible arm $i^*=\arg\max_i \mu_i$ at all
time instances.) 

Under a perfectly linear model, the observed reward $Y_t$ at
time $t$ is modeled as the random variable, $Y_t= \, \left\langle
  x_{A_t}, \theta^* \right\rangle +\eta_t = \mu_{A_t} + \eta_t$, where
$A_t$ is the action chosen at time $t$, $\theta^* \in \mathbb{R}^d$ is
 the unknown parameter vector, $\left\langle.,.\right\rangle$
denotes the inner product in $\mathbb{R}^d$ and $\eta_t$ is zero-mean stochastic noise assumed to be
{conditionally $R$-sub-Gaussian} given $A_t$. Thus, under a perfectly linear model, the mean reward for each arm is a linear function of its features: there exists a unique $\theta^* \in \mathbb{R}^d$ such that
$\mu_i = x_i^T \theta^*$ $\forall i \in \mathcal{A}$ (the uniqueness property follows from the full column rank of $\mathcal{X}^T$).

Consider now the case where a linear model for $\mu$ with respect to the features $\mathcal{X}$ may not be
valid, resulting in a deviation from linearity or a {\em misspecified} linear bandit model. We model the reward in this case by
\begin{equation}
  Y_t= \, \left \langle x_{A_t}, \theta \right\rangle + \epsilon_{A_t} + \eta_t  = \mu_{A_t} + \epsilon_{A_t} + \eta_t, \nonumber
\end{equation}
where $\theta \in \mathbb{R}^d$ is a choice of weights, and $\epsilon \bydef [\epsilon_1 \; \epsilon_2 \; \ldots \;
\epsilon_N]^T \in \mathbb{R}^N$ denotes the deviation in the expected
rewards of arms. Note that (a) the model remains perfectly linear
if\footnote{For a matrix $M$, span($M$) denotes the subspace spanned
  by the columns of $M$.}  $\epsilon \in \mbox{span}(\mathcal{X}^T)
\subseteq \mathbb{R}^N$), and (b) choice of $\theta$ satisfying the equation above is not unique if $\mu$ is separated from the subspace $\mbox{span}(\mathcal{X}^T)$, i.e., $\min_{\theta \in \mathbb{R}^d} \parallel\mathcal{X}^T \theta -\mu \parallel_2 > 0$. 


\section{Lower Bound for Linear Bandit Algorithms under Large Sparse Deviation} \label{sec:general_lower_bound}
In this section, we present our first key result -- a general lower bound on regret of any `optimal' linear bandit algorithm on misspecified problem instances. Specifically, we show that any linear bandit algorithm that enjoys the optimal $O(d\sqrt{T})$ regret scaling, for linearly parameterized models of dimension $d$, must in fact suffer linear regret under a misspecified model in which only one arm has a mismatched expected reward. 

\begin{theorem}
Let $\mathbb{A}$ be an algorithm for the linear bandit problem, whose expected regret is $\tilde{O}(d\sqrt{T})$ on any linear problem instance with feature dimension $d$, time horizon $T$ and expected rewards bounded in absolute value by $1$. There exists an instance of a sparsely perturbed linear bandit, with the expected reward of one arm having been perturbed, for which \textbf{$\mathbb{A}$} suffers linear, i.e., $\Omega(T)$, expected regret.
%
\label{thm:general_lower_bound}
\end{theorem}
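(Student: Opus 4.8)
The plan is to build, for a suitably large number of arms $N$, a one-dimensional linear instance together with a single-arm perturbation of it on which $\mathbb{A}$ provably incurs $\Omega(T)$ regret: I will locate one arm that an optimal linear algorithm barely ever pulls, and then perturb exactly that arm upward so that it becomes optimal, while $\mathbb{A}$ remains oblivious.

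First I would fix $d=1$, take noise $\eta_t\sim\mathcal N(0,R^2)$, and consider $N$ arms (with $N$ to be chosen, growing with $T$) having features $x_1=1$ and $x_2=\cdots=x_N=0$, so that $\cX^T$ has full column rank. Let the base linear instance $\mathcal I_0$ have parameter $\theta^*=1/2$; then $\mu=(1/2,0,\dots,0)$ is bounded in $[-1,1]$, arm $1$ is optimal, and every other arm is suboptimal by exactly $1/2$. Writing $T_i$ for the number of pulls of arm $i$, the hypothesis on $\mathbb{A}$ (with $d=1$) gives expected regret at most $C\sqrt{T}\log^p T$ on $\mathcal I_0$ for some constants $C,p$; since each pull of an arm $i\ge2$ costs regret $1/2$, we get $\sum_{i\ge2}\mathbb E_{\mathcal I_0}[T_i]\le 2C\sqrt T\log^p T$, and hence by pigeonhole there is an arm $j\in\{2,\dots,N\}$ with $\mathbb E_{\mathcal I_0}[T_j]\le \frac{2C\sqrt T\log^p T}{N-1}$.

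Next I would define the misspecified instance $\mathcal I_1$: identical to $\mathcal I_0$ in features and noise, but with the sparse perturbation $\epsilon_j=1$ and $\epsilon_i=0$ for $i\ne j$ (indeed $\epsilon\notin\mathrm{span}(\cX^T)$, so this genuinely leaves the linear subspace). On $\mathcal I_1$, arm $j$ has mean reward $1$ and is uniquely optimal, while every other arm is suboptimal by at least $1/2$. The only difference between the two instances is the reward law of arm $j$ (a unit mean-shift of a Gaussian), so the standard divergence decomposition for bandit interaction transcripts, together with Pinsker's inequality and $T_j\le T$, yields
\[
\mathbb E_{\mathcal I_1}[T_j]\ \le\ \mathbb E_{\mathcal I_0}[T_j]+\frac{T}{2R}\sqrt{\mathbb E_{\mathcal I_0}[T_j]}.
\]
Substituting the pigeonhole bound and taking $N-1=T$ makes the right-hand side $O(T^{3/4}\log^{p/2}T)=o(T)$.

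Finally, on $\mathcal I_1$ the algorithm $\mathbb{A}$ pulls arms other than $j$ for $T-\mathbb E_{\mathcal I_1}[T_j]=(1-o(1))T$ rounds in expectation, and each such pull costs regret at least $1/2$, so $\mathbb{A}$'s expected regret on $\mathcal I_1$ is $\Omega(T)$, with $\mathcal I_1$ a perturbation of the linear instance $\mathcal I_0$ at a single arm. I expect the crux to be the change-of-measure step and the calibration $N\asymp T$: one must argue that the very guarantee that makes $\mathbb{A}$ optimal on linear instances (forcing $\mathbb E_{\mathcal I_0}[T_j]$ to be tiny for some $j$) also keeps $\mathbb E_{\mathcal I_1}[T_j]$ sublinear, and this works precisely because, with the number of arms large relative to $\sqrt T$, the single perturbed arm is statistically invisible to $\mathbb{A}$ over the $T$ rounds.
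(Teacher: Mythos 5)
Your proposal is correct and follows the same skeleton as the paper's proof: use the $\tilde{O}(d\sqrt{T})$ guarantee plus pigeonhole to isolate a suboptimal arm that $\mathbb{A}$ almost never pulls on the linear instance, perturb only that arm to make it optimal, and then argue by change of measure that $\mathbb{A}$ still almost never pulls it, hence incurs $\Omega(T)$ regret. Where you genuinely diverge is in the calibration and the change-of-measure machinery. The paper takes $N=1+c\sqrt{T}$ arms so that the pigeonholed arm is pulled only $O(1)$ times in expectation, applies Markov's inequality to get a constant-probability event on which both the pull count and the regret are controlled, and then lower-bounds the pathwise likelihood ratio $\prod_{i=1}^{N_x(T)}\mathbb{P}_{pert}(r_{x,i})/\mathbb{P}_{lin}(r_{x,i})$ by a $T$-independent constant $h(a)$ on that event --- a step that implicitly assumes discrete reward distributions (so that $\mathbb{P}_{lin}(r_{x,i})\le 1$ can be dropped from the denominator) and is somewhat informal about $h(a)$ being constant. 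You instead take $N-1=T$ arms so that $\mathbb{E}_{\mathcal{I}_0}[T_j]=\tilde{O}(1/\sqrt{T})$, and invoke the standard divergence decomposition plus Pinsker to transfer the expected pull count directly, paying only an additive $O(T^{3/4}\log^{p/2}T)=o(T)$. Your route is cleaner and handles Gaussian (continuous) rewards, which the paper's likelihood-ratio bound does not; the paper's route needs fewer arms ($\Theta(\sqrt{T})$ versus $\Theta(T)$) and yields a constant-probability lower bound on a regret event rather than only a bound in expectation. One cosmetic remark: your choice $x_2=\cdots=x_N=0$ makes all perturbable arms share a feature vector (so a linear algorithm can deduce their common mean is exactly zero); this is legitimate and if anything sharpens the counterexample, but the argument is unchanged if you prefer distinct, nearly-collinear features as in the paper's less explicit construction.
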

 The formal proof of Theorem \ref{thm:general_lower_bound} is deferred to the appendix, but we present the main ideas in the following. \\
 
\noindent {\bf Proof sketch.} The argument starts by considering a perfectly linear bandit instance with order of $\sqrt{T}$ arms in dimension $d$. It follows from the regret hypothesis that number of suboptimal arm plays must be $O(\sqrt{T})$. By a pigeonhole argument, since there are order of $\sqrt{T}$ suboptimal arms, there must exist a suboptimal arm that is played no more than $O(1)$ times in expectation. Markov's inequality then gives that the event that both a) this suboptimal arm is played at most $O(1)$ times {\em and} b) overall regret is $O(d\sqrt{T})$, occurs with probability at least a constant, say $1/3$. 

Having isolated a suboptimal arm that is played very rarely by the algorithm (note that the choice of such an arm may very well depend on the algorithm), the argument proceeds by adding a perturbation to this suboptimal arm's reward to make it the best arm in the problem instance. A change-of-measure argument is now used to reason that in the perturbed instance, the probability of the algorithm playing the arm in question does not change significantly as it was anyway played only a constant number of times in the pure linear model. But this must imply that the expected regret is linear due to neglecting the optimal arm in the perturbed problem instance.

\section{Performance of OFUL Under Deviation} \label{sec:OFUL_algorithms}  A
state-of-the-art algorithm for the linear bandit problem is OFUL. We study the performance of OFUL\footnote{We
  consider the OFUL algorithm in this work chiefly because it is known
  to be the most competitive in terms of regret scaling. It is conceivable that similar results can be
  shown for other, related, bandit strategies as well, such as
  ConfidenceBall \cite{DaniHayKak08}, UncertaintyEllipsoid
  \cite{rusmevichientong2010linearly}, etc.}  for various cases of
deviations $\epsilon$ (suitably ``small'' and
``large''). Specifically, we argue that OFUL is robust to small
deviations, but for large deviations, the performance of OFUL is very
poor leading to a linear regret scaling. The findings motivate us to
propose a more robust algorithm to tackle linear bandit problems with
significantly large deviations.

At time $t \geq 1$, based on previous actions
and observations upto $t-1$, OFUL solves a regularized linear least
squares problem to estimate the unknown parameter
$\theta^* \in \mathbb{R}^d$ and constructs a high-confidence
ellipsoid around the estimate using concentration-of-measure
properties of the sampled rewards. Using the confidence set, the high
probability regret of OFUL is $O(d\sqrt{T})$.

\subsection{OFUL with Small Deviation} \label{subsec:small_deviation}
When the deviation from linearity is considerably small, it can be shown that OFUL performs similar to the perfect linear model in terms of regret scaling (see \cite[Theorem 3]{gopalan2016low} for details and a formal quantification of ``small'' deviation). Assuming $||\theta^*||_2\leq S,  $ $||x_i||_2 \leq L $ and $|\mu_i| \leq 1$ for all $i \in \mathcal{A}$, with probability at least $1-\tilde{\delta}$ ($\tilde{\delta} > 0$), the cumulative regret upto time $T$ of OFUL is given by,
 \footnotesize
  \begin{eqnarray}
    R_{OFUL}(T) \leq 8 \rho' \sqrt{Td \log\left(1 +
      \frac{TL^2}{\lambda d}\right)} \Big (\lambda^{1/2}S  \nonumber \\
    +R\sqrt{2 \log\frac{1}{\tilde{\delta}} + d \log
      \left(1+\frac{TL^2}{\lambda d}\right)} \Big) \nonumber
        \end{eqnarray}
  \normalsize
where $\rho'$ is a geometric constant that measures the ``distortion'' in the arms' actual rewards with respect to (linear) approximation and $\lambda$ is a regularization parameter.

\textbf{\textit{Remark:}} OFUL retains $O(d\sqrt{T})$ regret scaling even in the presence of ``small'' deviation.
\subsection{OFUL with Large Sparse Deviation} \label{subsec:OFUL_large_sparse}
The regret of OFUL under pure linear bandit instance is $O(d\sqrt{T})$. Therefore from Theorem~\ref{thm:general_lower_bound}, the cumulative expected regret under large sparse deviation will be $\Omega(T)$.
\subsection{OFUL with Large Non-sparse Deviation} \label{subsec:linear_regret}
 We need to identify a natural class of structured large deviations that we dub {\em non-sparse}. We impose the following structure in terms of sparsity on the expected
rewards $\mu$. Recall from
Section~\ref{sec:system_model} that $\mathcal{X}$ denotes the context
matrix, $\mu$ the mean reward vector, $\theta$  a choice of weights, and
$\epsilon$ the deviation from mean $\mu$; thus, $\mu=\mathcal{X}^T
\theta + \epsilon$. 

\begin{definition} [{\bf Non-sparse deviation}] 
Given a feature set $X^{f} = \lbrace x_1, ..., x_N \rbrace \subset R^d $ and constants $ l > 0$, $ \beta \in [0,1]$, an expected reward vector $ \mu \in R^N$ is said to have the $(l,\beta)$ deviation property if,
 \begin{equation}
\mathbb{P} \big (|x_{i_{d+1}}^T[X^{f}_{i_1,\ldots,i_d}]^{-1}[\mu_{i_1,\ldots,i_d}]-\mu_{i_{d+1}}| \geq l \big ) \geq 1-\beta \nonumber
\end{equation}

\noindent for all $\lbrace i_1,i_2,\ldots,i_d,i_{d+1} \rbrace
\subseteq \lbrace 1,2,\ldots,N \rbrace $, such that $\lbrace x_{i_1}, x_{i_2}, \ldots, x_{i_d}
\rbrace$ linearly independent, where
$ X^{f}_{i_1,\ldots,i_d}=[x_{i_1}^T,\ldots, x_{i_{d}}^T]^T$ and
$\mu_{i_1,\ldots,i_d}=[\mu_{i_1} , \ldots, \mu_{i_{d}}]^T$. The randomness is over the choice of $d+1$ arms.
\label{def:sparse}
\end{definition}
In other words, the deviation of reward $\mu$ is $(l,\beta)$ non-sparse if,
whenever one uses any $d$ linearly independent features, with their
corresponding rewards, to regress a $(d+1)$-th unknown reward
linearly, then the magnitude of error is at least $l >
0$ (bounded away from 0) with probability at least $1-\beta$. Typically, $\beta$ is positive and close to $0$. 

For example, consider the problem instance of Theorem~\ref{thm:general_lower_bound}, i.e., only one arm is perturbed away from linearity. This is an example of sparse deviation. If the perturbed arm is picked as one of $d+1$ arms in Definition~\ref{def:sparse}, $l$ will be a large positive number, but when the perturbed arm is missed, $l$ will be 0, which is inconsistent with Definition~\ref{def:sparse}. Also, $\beta$ can be chosen such that the probability of missing the perturbed arm is strictly greater than $\beta$.

We now argue, by counterexample (Theorem~\ref{lem:OFUL_large_dev}), that the regret of OFUL with large
 non-sparse deviation is $\Omega(T)$. 

\begin{theorem}
 Consider a linear bandit problem with $\mathcal{A}=\lbrace 1,2 \rbrace$, context matrix $\mathcal{X}=[1\; 2]$, mean reward vector $\mu=[\mu_1 \,\,\, \mu_2]^T$ with $\mu_2 > \mu_1 $ and $\mu_2 \neq 2\mu_1$. The deviation vector $\epsilon = [\epsilon_1 \,\,\, \epsilon_2]^T$ is such that $|\epsilon_i| > c$ ($c >0$) for $i=\lbrace 1,2 \rbrace$ (with respect to Definition~\ref{def:sparse}, $l=c$ and $\beta=0$). There exists a problem instance for which the expected regret of OFUL until time $T$, $\mathbb {E} (R_{OFUL})=\Omega(T)$.
\label{lem:OFUL_large_dev}
\end{theorem}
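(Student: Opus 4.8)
The plan is to turn OFUL's trust in the linear model against it: if every arm's mean reward is negative while its feature is a positive scalar, the least-squares slope is forced negative, and a negative slope makes the arm with the \emph{smallest} feature look best, regardless of which arm actually has the largest reward. I would instantiate the theorem with the one-dimensional problem in the statement, $x_1=1$, $x_2=2$, taking mean rewards with $\mu_1<\mu_2<0$, say $\mu_1=-\tfrac34$, $\mu_2=-\tfrac12$, and $R$-sub-Gaussian noise. First I would check this is a legitimate instance: arm $2$ is the unique best arm since $\mu_2>\mu_1$; because $\mu_1<0$ we automatically have $2\mu_1<\mu_1<\mu_2$, so $\mu_2\neq2\mu_1$; and since the no-intercept line through $(x_1,\mu_1)$ predicts $2\mu_1\neq\mu_2$ at $x_2$ (and symmetrically), the instance has the $(l,\beta)$ non-sparse deviation property of Definition~\ref{def:sparse} with $\beta=0$ (for $N=2$ there is only one way to choose $d+1=2$ arms, so the defining probability is $1$) and some $l=c>0$.

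Next I would make OFUL's decision rule explicit in this setting. Let $V_t=\lambda+\sum_{s\le t}x_{A_s}^2$, let $\hat\theta_t=V_t^{-1}\sum_{s\le t}x_{A_s}Y_s$ be the regularized least-squares estimate, and let $[\hat\theta_t-r_t,\,\hat\theta_t+r_t]$ with $r_t=\sqrt{\beta_t/V_t}$ and $\sqrt{\beta_t}=R\sqrt{\log\!\big((1+tL^2/\lambda)/\delta\big)}+\sqrt{\lambda}\,S$ be OFUL's confidence interval. Since $x_a>0$ for both arms, the optimistic value of arm $a$ is $x_a(\hat\theta_{t-1}+r_{t-1})$, so OFUL plays arm $2$ when $\hat\theta_{t-1}+r_{t-1}>0$ and arm $1$ when $\hat\theta_{t-1}+r_{t-1}<0$. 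It therefore suffices to show that, with probability bounded away from $0$, $\hat\theta_{t-1}+r_{t-1}<0$ for all but $o(T)$ rounds.

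The crux is the estimate on $\hat\theta_{t-1}$. I would split its numerator as $\sum_{s<t}x_{A_s}\mu_{A_s}+\sum_{s<t}x_{A_s}\eta_s$. The first (drift) sum is at most $-\tfrac34(t-1)$ deterministically, since $x_{A_s}\mu_{A_s}\in\{\mu_1,2\mu_2\}$ and both values are $\le-\tfrac34$ regardless of which arm is played. The second is a martingale with bounded, $R$-sub-Gaussian increments, so the self-normalized concentration bound of \cite{AbbYadPS11} that underlies OFUL's analysis gives, on an event of probability $\ge1-\delta$ and uniformly in $t$, $\big|\sum_{s<t}x_{A_s}\eta_s\big|=O\big(\sqrt{t\log(t/\delta)}\big)$. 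On that event the numerator is $\le-\tfrac38(t-1)$ once $t$ exceeds some $T_1=O(\log(1/\delta)+\lambda)$, while $V_{t-1}\le\lambda+4(t-1)$, so $\hat\theta_{t-1}\le-\tfrac{3}{40}$; at the same time $V_{t-1}\ge t-1$ and $\sqrt{\beta_{t-1}}=O(\sqrt{\log(t/\delta)})$ give $r_{t-1}=O\big(\sqrt{\log(t/\delta)/t}\big)\to0$. Hence $\hat\theta_{t-1}+r_{t-1}<0$ for all $t$ above some $T_1'=O(\log(T/\delta))$, which forces $A_t=1$.

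On this event of probability $\ge1-\delta$, OFUL plays the suboptimal arm $1$ in every round $t\in[T_1',T]$ and hence incurs regret at least $(\mu_2-\mu_1)(T-T_1')$; since regret is nonnegative on the complement, choosing $\delta=\tfrac13$ gives $\mathbb{E}[R_{OFUL}(T)]\ge\tfrac23(\mu_2-\mu_1)\big(T-O(\log T)\big)=\Omega(T)$. I expect the main obstacle to be exactly the uniform-in-$t$ control of the noise sum $\sum_{s<t}x_{A_s}\eta_s$ needed to guarantee that $\hat\theta_{t-1}$ is negative in \emph{all} the relevant rounds simultaneously; this is where one invokes the self-normalized (method-of-mixtures) bound already powering OFUL's regret proof, although a plain union bound over $t\le T$ with per-round failure probability $\delta/T$ also works, at the cost of an additional logarithmic factor in $T_1'$.
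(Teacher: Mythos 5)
Your proposal is correct, and it follows the same broad template as the paper's proof: the same $d=1$, $N=2$ instance with features $1$ and $2$, the same reduction of OFUL's choice to the sign of the upper confidence endpoint $\hat\theta_{t-1}+r_{t-1}$ (since both features are positive scalars), and the same sub-Gaussian concentration to show this endpoint stays negative. The substantive difference is in how you control $\hat\theta_t$. The paper conditions on the "limiting" regime $N_1\to t,\ N_2\to 0$ --- i.e., it \emph{assumes} arm $1$ has been played almost exclusively in order to bound $\hat\theta_t\le\mu_1+\tfrac{2}{t}\sum_i\eta_i$ and conclude arm $1$ keeps being played --- which makes the argument a self-consistency check rather than an unconditional proof (it also implicitly needs $\mu_1<0$, the "point $B$" configuration in its figure). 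Your choice of making \emph{both} means negative ($\mu_1=-\tfrac34$, $\mu_2=-\tfrac12$) removes this circularity: $x_{A_s}\mu_{A_s}\in\{\mu_1,2\mu_2\}$ is at most $-\tfrac34$ no matter which arm is played, so the drift term in the numerator of $\hat\theta_t$ is deterministically $\le-\tfrac34(t-1)$ and the estimate is pinned below a negative constant on a single high-probability event, with no assumption on the play sequence. This buys a genuinely cleaner and more rigorous argument at no cost; the only things to make sure you state explicitly in a full write-up are the uniform-in-$t$ noise bound (which you already flag, and which a union bound over $t\le T$ handles) and the verification that your instance meets the theorem's hypotheses ($\mu_2\ne2\mu_1$, both $\epsilon_i$ bounded away from $0$ for the relevant choice of $\theta$, and the $(l,0)$ non-sparse deviation property), which you also do.
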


 The description of problem instance with the formal proof of theorem is deferred to the supplementary material.

\textbf{\textit{Summary:}} OFUL is robust to ``small'' deviation (irrespective of sparsity) but incurs linear regret under large deviation (for both sparse and non-sparse). Theorem~\ref{thm:general_lower_bound} shows the futility of designing any linear bandit algorithm under sparse deviation. However the quest is still valid if the deviation is large but non-sparse. We will investigate this issue in rest of the paper. It is clear that under large deviation, context vectors do not contribute in reducing regret and thus a rational player should discard contexts under such circumstances. The player may choose any standard algorithm for basic multi-armed bandits (UCB for instance).

\section{A Linear Bandit Algorithm Robust to Large, Non-sparse Deviations} \label{sec:sparsity} 

This section accomplishes the objective of 
developing a new algorithm that maintains the sublinear regret
property in a model with non-sparse, large deviations.
Non-sparse deviations can be seen to
  naturally arise in the presence of stochastic measurement or
  estimation noise; e.g., let $x_i$ and $\bar{x_i}$ be the measured and original context
  vector respectively for arm $i$ with $x_i =\bar{x_i} + \zeta_i$. $\zeta_i^T \theta $ 
  can be modeled as a Gaussian random variable with
  mean, $\mathbb{E}(\zeta_i^T \theta)=\epsilon_i$. Substituting, we get, 
  $\mu=\mathcal{X}^T \theta + \epsilon$. It is possible to find suitable $(l,\beta)$ pair
  (Definition~\ref{def:sparse}) for this model and thus $\mu$ is
  non-sparse. The associated feature vectors corresponding to the mean reward
  vector satisfying Definition~\ref{def:sparse}, are called
  ``uniformly perturbed features''. 
  
 We now define $2$ hypotheses -- $\mathbf{\mathcal{H}_0}$ and $\mathbf{\mathcal{H}_1}$, corresponding intuitively to ``linear'' and ``not linear" -- on $(\mathcal{X}, \mu)$, which will be used to quantify the performance of the algorithm developed in this section. We say that hypothesis $\mathbf{\mathcal{H}_0}$ holds if the separation of $\mu$ from $\mbox{span}(\mathcal{X}^T)$, i.e., the quantity $\min_{\theta \in \mathbb{R}^d} \parallel\mathcal{X}^T \theta -\mu \parallel_2$, is $0$, i.e., the model is perfectly linear. On the other hand, we say that hypothesis $\mathbf{\mathcal{H}_1}$ holds if the separation is greater than $0$ and $\mu$ satisfies the $(l_1,\beta)$ deviation property of Definition~\ref{def:sparse} with $l_1 > 0$.
 
 \textit{\textbf{Remark:}} The definition of $\mathbf{\mathcal{H}_0}$ be generalized to handle small deviations in the $\parallel \cdot \parallel_2$ norm with distortion parameter $\rho' \geq 1$, in the sense of \cite[Theorem 3]{gopalan2016low}.
   
\subsection{A Robust Linear Bandit (RLB) Algorithm} \label{sec:proposed_algo}
 The sequence of actions for the proposed novel bandit algorithm,
namely Robust Linear Bandit (RLB) is summarized in
Algorithm~\ref{algo:proposed_scheme}, mainly consisting of three
steps. First, RLB executes an initial sampling phase, in
which $d+1$ arms out of $N$ are sampled. Based on these samples,
it constructs a confidence ellipsoid for $\theta^*$ in the
next phase. Finally, based on experimentation on the $(d+1)$-th arm,
it decides to play either OFUL or UCB for the remainder of the horizon. We will illustrate the necessity of non-sparse deviation as follows: consider a problem instance with  
$\epsilon=(0,\ldots,0,c,0,\ldots,0)$, $|c|\gg 0$. As $N \gg d$, 
with high probability, the deviated arm can be missed in the sampling 
phase and according to Algorithm~\ref{algo:proposed_scheme}, the learner learns that the model is linear and decides to play OFUL which according to Theorem~\ref{thm:general_lower_bound} incurs $\Omega(T)$ regret.
\subsection*{Step 1: Sampling of $d+1$ arms} \label{subsec:sampling_interval}
 For non-sparse deviation, the choice of $d+1$ among $N$ arms may be arbitrary. Without loss of generality, we sample the arms indexed $\lbrace 1,2,\ldots,d+1 \rbrace $,  $k$ times each (resulting $(d+1) \times k$ ($\bydef \tau$) sampling instances). From Hoeffding's inequality, the sample mean estimate of $d+1$-th arm, $\hat{\mu}_{d+1}$, satisfies $\mathbb{P}(|\hat{\mu}_{d+1}-\mu_{d+1}| > r_s) \leq \exp (-2r_s^2 k )$. With $\delta_s:=\exp (-2 r_s^2 k)$, the confidence interval around $\mu_{d+1}$ will be $[\mu_{d+1}-r_s, \mu_{d+1} + r_s] $ with probability at least $1-\delta_s$.
\subsection*{Step 2: Construction of Confidence Ellipsoid} \label{subsec:conf_ellipsoid}
  Based on the samples of first $d$ arms, RLB constructs a confidence ellipsoid for $\theta^*$ assuming $\mathcal{H}_0$ is true. Under $\mathcal{H}_0$,
  \begin{equation}
  y_i^{(j)}= \left\langle x_i,\theta^* \right\rangle + \eta_{i,j} \, \, \forall i\in \lbrace 1,2,\cdot,d \rbrace, 1 \leq j \leq k. \nonumber
  \end{equation}
  
In this setup, we re-define the reward vector $\mathbf{Y} =[ y_1^{(1)},\cdot, y_1^{(k)}, y_2^{(1)},\cdot,y_2^{(k)}, \ldots ,y_d^{(k)}]^T$, feature-matrix $\mathbf{X}= [x_1^T, x_1^T, \cdot, x_1^T , x_2^T, x_2^T, \ldots, x_d^T]^T$ and noise vector $\eta=[\eta_1, \eta_2,  \ldots ,\eta_{k d}]^T$ with $\mathbf{Y}=\mathbf{X} \theta^* + \eta$. Let $\hat{\theta}$ be the solution of $ \ell^2$ regularized least square, i.e., $\hat{\theta}=(\mathbf{X}^T \mathbf{X} + \lambda I)^{-1}\mathbf{X}^T\mathbf{Y}$, where $\lambda>0 $ is the regularization parameter.

Using the same line of argument as in \cite{AbbYadPS11}, it can be shown that for any $\bar{\delta} >0$, with probability at least $1-\bar{\delta}$, $\theta^*$ lies in the set,
\footnotesize
\begin{eqnarray}
C &=& \bigg \lbrace \theta \in \mathbb{R}^d : \norm {\hat{\theta}- \theta}_{\bar{V}} \leq  R  \nonumber  \\
&&\times \sqrt{2 \log  (\frac{\det(\bar{V})^{1/2} \det (\lambda I)^{-1/2}}{\bar{\delta}})} + \lambda^{1/2}S \bigg \rbrace \nonumber
\end{eqnarray}
\normalsize
$\bar{V}=\lambda I + k \sum_{i=1}^{d} x_i x_i^T= \mathbf{X}^T \mathbf{X}+\lambda I$, $\norm {\theta^*}_2 \leq S $.
\subsection*{Step 3: Hypothesis test for non-sparse deviation}
We project the confidence ellipsoid onto the context of $d+1$-th arm. The projection, $\left\langle x_{d+1}, \theta \right\rangle, \theta \in C$  will result in an interval, $I_e$, centered at $ x_{d+1}^T \hat{\theta}$ (Lemma~\ref{lem: conf_interval_hyp0}). We compare $I_e$ with the interval obtained from sampling $d+1$-th arm, $I_s$. If $\mathcal{H}_0$ is true, Lemma~\ref{lem:low_false_alarm} states that $I_e$ and $I_s$ 
overlap with high probability. Similarly, from Lemma~\ref{lem:low_miss_prob}, under $\mathcal{H}_1$, $I_e$ will not intersect with $I_s$ with high probability, i.e., probability of choosing $\mathcal{H}_1$ when $\mathcal{H}_0$ is true (and vice versa), is significantly low. \footnote{Owing to space constraints, Lemma~\ref{lem: conf_interval_hyp0}, \ref{lem:low_false_alarm} and \ref{lem:low_miss_prob}, with their proofs are moved to supplementary material.} Based on this experiment, the player adopts the following decision rule: 
if $ I_e \cap I_s \neq \phi$, declare $\mathcal{H}_0$ and play OFUL, otherwise declare $\mathcal{H}_1$ and play UCB.
\begin{algorithm}[!hbtp]
	\caption{Robust Linear Bandit (RLB)} \label{algo:proposed_scheme}
	\begin{algorithmic}[1]	
	\STATE Sample the first $d$ arms $k$ times each.
	\STATE Compute the $\ell^2$-regularized least square estimate ($\hat{\theta}$) based on $d \times k$ samples assuming $\mathcal{H}_0$.
	\STATE Construct a confidence ellipsoid $C$ such that with high probability, $\theta^* \in C$.
    \STATE Project the ellipsoid onto the context of $d+1$ th arm to obtain interval $I_e$.	
	\STATE Sample $d+1$ th arm $k$ times, obtain mean estimate, $\hat{\mu}_{d+1}$, and confidence interval $I_s$.
	\STATE If $ I_e \cap I_s \neq \phi$, declare $\mathcal{H}_0$ and play OFUL for the remaining time instants, otherwise play UCB.
	\end{algorithmic} 
\end{algorithm}
\section{Regret Analysis} \label{sec:regret}
The objective of RLB is to learn the gap from linearity and play accordingly to obtain regret of Table~\ref{table:regret_algo}. For zero deviation, RLB exploits linear reward structure and incur a regret of $O(d\sqrt{T})$. For large non-sparse deviation, RLB discards the contexts and avoids linear regret. During the initial sampling phase upto $ \tau $, regret will scale linearly as each step is either forced exploration or exploitation, i.e., $R_s(\tau)= O(\tau)$. After that, based on the player's decision, either OFUL or UCB is played. For $\mathcal{H}_0$, we  use regret of OFUL as given in \cite{AbbYadPS11}. With $N$ arms and time $T$, \cite{bubeck2012regret} provided $O(\sqrt{NT\log T})$ regret for standard UCB. Also, \cite{audibert2009minimax_moss}, gave an algorithm MOSS, inspired by UCB which incurs a regret upper bound of $49 \sqrt{NT}$. 
\subsection{Regret of Algorithm \ref{algo:proposed_scheme}} \label{subsec:regret_proposed}
 From Lemma~(\ref{lem:low_false_alarm}) it can be seen that, if $\mathcal{H}_0$ is true, OFUL and UCB are played with a probability of $1-\delta_1(k,\lambda)$ and $\delta_1(k,\lambda)$ respectively and accordingly regret is accumulated. By an appropriate choice of $k$ and $\lambda$, $\delta_1(k,\lambda)$ can be made arbitrarily close to $0$. Similarly, under $\mathcal{H}_1$, corresponding probabilities are $\delta_2(k,\lambda)+ \beta $ and $1-\delta_2(k,\lambda) -\beta$ respectively (Lemma~\ref{lem:low_miss_prob}). $\beta$ comes from the definition of non-sparse deviation. Therefore, under non-sparse deviation, probability of playing OFUL and incurring linear regret is $\delta_2(k,\lambda)+ \beta $, which can be pushed to arbitrarily small value by proper choice of $k$ and $\lambda$ as typically, $\beta$ is very small and close to $0$. $\tau$ can be choosen as $\log(T)$, a sub-linear function of $T$. Now we are in a position to state our main result - an upper bound on regret of RLB.

\begin{theorem}[Regret guarantees for RLB]
  The expected regret of RLB in $T$ time steps
  satisfies the following: 
(a) Under hypothesis $\mathcal{H}_0$,
  \footnotesize
\begin{eqnarray}
&& \mathbb{E}(R_{RLB}(T)) \leq c_1  ( (d+1)k ) + 4 [ (1-\delta_1(k,\lambda)) \nonumber \\
 &&\times  \sqrt{(T-\log T)d \log (1 +
      \frac{(T-\log T)L^2}{\lambda d})}  (\lambda^{1/2}S  \nonumber \\
    &+& R\sqrt{2 \log\frac{1}{\delta} + d \log
      (1+\frac{(T-\log T)L^2}{\lambda d})} ) ] \nonumber \\
      &+& 49 \delta_1(k,\lambda)  \sqrt{N(T-\log T)} \nonumber
\end{eqnarray}
\normalsize
(b) Under $\mathcal{H}_1$,
\footnotesize
\begin{eqnarray}
 \mathbb{E}(R_{RLB}(T)) \leq c_1  ( (d+1)k )+ 49 (1-\delta_2(k,\lambda)-\beta) \nonumber \\
 \times \sqrt{N(T-\log T)} + c_2 (\delta_2(k,\lambda)+\beta)(T-\log T) \nonumber
\end{eqnarray}
\normalsize
where, a total of $d+1$ arms are sampled $k$ times each, $\lambda$ is regularization parameter, $\delta$, $L$, $c_1$, $c_2$ are constants and,
\footnotesize
\begin{eqnarray}
&&\delta_1(k,\lambda) \bydef \exp ( - \frac{k (r_s \sqrt{\log k} + r_p (\sqrt{\log k} -1))^2}{2R^2}) \nonumber \\
&&\delta_2(k,\lambda) \bydef \exp (- \frac{k (l_1 - r_p \sqrt{\log k} - r_s \sqrt{\log k})^2 }{2R^2}) \nonumber 
\end{eqnarray}
\normalsize
with $2 r_s$ and $2r_p$ being the length of the intervals $I_s$ and $I_e$ respectively and $l_1$ comes from the definition of $\mathcal{H}_1$.
\label{thm:total_regret}
\end{theorem}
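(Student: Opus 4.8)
The plan is a straightforward regret decomposition over the three phases of RLB, combined with the law of total expectation applied to the outcome of the hypothesis test. Write $\tau = (d+1)k$ for the length of the initial sampling phase and let $E \bydef \{I_e \cap I_s \neq \phi\}$ be the event on which RLB declares $\mathcal{H}_0$ and commits to OFUL for rounds $\tau+1,\dots,T$ (on $E^c$ it commits to UCB/MOSS). Then
\begin{equation}
R_{RLB}(T) = R_s(\tau) + \mathbf{1}_E\, R^{\mathrm{OFUL}}(T-\tau) + \mathbf{1}_{E^c}\, R^{\mathrm{UCB}}(T-\tau), \nonumber
\end{equation}
where $R_s(\tau)$ collects the regret of the forced pulls. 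First I would bound $R_s(\tau) \le c_1 \tau = c_1 (d+1)k$ using only $|\mu_i|\le 1$ and $\mu^* - \mu_{A_t} \le 2$; this is the common leading term in both (a) and (b). Next, for the post-sampling regret I would take expectations and split according to $E$.

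For part (a) (under $\mathcal{H}_0$): Lemma~\ref{lem: conf_interval_hyp0} identifies $I_e$ as an interval of half-width $r_p$ centered at $x_{d+1}^T\hat\theta$, and Lemma~\ref{lem:low_false_alarm} gives $\mathbb{P}(E)\ge 1-\delta_1(k,\lambda)$. On $E$, RLB runs OFUL on a horizon of $T-\tau = T-\log T$ rounds with confidence parameter $\delta$, so I invoke the high-probability regret bound of \cite{AbbYadPS11} (as already quoted in Section~\ref{subsec:small_deviation}) with $T$ replaced by $T-\log T$; on $E^c$, RLB runs MOSS and I invoke the $49\sqrt{N(T-\log T)}$ bound of \cite{audibert2009minimax_moss}. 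Bounding $\mathbf{1}_E \le 1$ inside the OFUL term and $\mathbb{P}(E^c)\le \delta_1(k,\lambda)$ inside the MOSS term and summing with the sampling term yields exactly the claimed expression in (a). The factor $(1-\delta_1(k,\lambda))$ in the statement is simply $\mathbb{P}(E)$ retained rather than upper-bounded by $1$.

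For part (b) (under $\mathcal{H}_1$): by Lemma~\ref{lem:low_miss_prob}, $\mathbb{P}(E)\le \delta_2(k,\lambda)+\beta$, where the $\beta$ term is inherited from the $(l_1,\beta)$ deviation property (the event that the sampled $d{+}1$ arms fail the separation guarantee of Definition~\ref{def:sparse}). On $E^c$ (probability $\ge 1-\delta_2(k,\lambda)-\beta$) RLB runs MOSS, contributing $49(1-\delta_2(k,\lambda)-\beta)\sqrt{N(T-\log T)}$; on $E$ RLB runs OFUL under a large, non-sparse deviation, where by Theorem~\ref{lem:OFUL_large_dev} (and trivially because rewards are bounded) the regret is at most $c_2 (T-\tau)$. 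Multiplying by $\mathbb{P}(E)\le \delta_2(k,\lambda)+\beta$ and adding the sampling term gives (b). The explicit forms of $\delta_1$ and $\delta_2$ follow by substituting $\delta_s = \exp(-2r_s^2 k)$ and the analogous concentration bound for the projected estimate, choosing the sampling radii $r_s, r_p$ to scale like $\sqrt{(\log k)/k}$ so that the interval half-widths shrink while the sub-Gaussian tail exponent becomes $k(\cdot)^2/(2R^2)$.

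The routine parts are the decomposition and the substitution of the two off-the-shelf regret bounds. The step I expect to be the genuine obstacle — and the reason Lemmas~\ref{lem: conf_interval_hyp0}–\ref{lem:low_miss_prob} are invoked rather than reproved here — is the careful control of the test's false-alarm and miss probabilities: one must show that, under $\mathcal{H}_0$, $x_{d+1}^T\hat\theta$ concentrates around $\mu_{d+1}$ tightly enough that the (width-$2r_p$) interval $I_e$ overlaps the (width-$2r_s$) interval $I_s$ with probability $1-\delta_1$; and, under $\mathcal{H}_1$, that the true $l_1$-separation of $\mu_{d+1}$ from $\mathrm{span}(\mathcal{X}^T)$ exceeds $r_p\sqrt{\log k}+r_s\sqrt{\log k}$ often enough that $I_e\cap I_s=\phi$ with probability $1-\delta_2-\beta$. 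Both require relating $\|\hat\theta-\theta^*\|_{\bar V}$ (via the confidence set $C$) to the scalar quantity $|x_{d+1}^T(\hat\theta-\theta^*)|$ through a Cauchy–Schwarz/operator-norm argument on $\bar V = \lambda I + k\sum_{i=1}^d x_i x_i^T$, and tracking how the resulting half-width $r_p$ depends on $k$ and $\lambda$; this is where the $\sqrt{\log k}$ factors in $\delta_1,\delta_2$ originate.
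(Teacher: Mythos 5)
Your proposal is correct and follows essentially the same route as the paper: a decomposition into the forced-sampling phase plus a post-test phase, conditioning on the test outcome via Lemmas~\ref{lem:low_false_alarm} and~\ref{lem:low_miss_prob}, and substitution of the off-the-shelf OFUL and MOSS regret bounds. In fact your write-up is considerably more explicit than the paper's own two-sentence proof, and you correctly locate the real technical content in the false-alarm/miss-probability lemmas rather than in the theorem's proof itself.
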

{\bf Implication.} We see that if $k$ increases, $\delta_1(k,\lambda)$ and $\delta_2(k,\lambda)$ goes to 0 exponentially. Under $\mathcal{H}_1$ and a given $(l,\beta)$ pair, for RLB to decide in favor of $\mathcal{H}_1$ and hence ensuring sub-linear regret with probability greater than $1-\delta_2(k,\lambda)-\beta$, we need, $\sqrt{\log k}(r_p + r_s) < l_1 $, (shown in the proof of Lemma~\ref{lem:low_miss_prob}). Since $r_s$ and $r_p$ are both $O(1/\sqrt{k})$, $k$ satisfies, $k/\log k > b/l_1^2$ for some constant $b$ ($>0$). Simulations show that a considerably small $\lambda$ also pushes $\delta_1(k,\lambda)$ and $\delta_2(k,\lambda)$ close to $0$. Therefore, with $\mathcal{H}_0$, $R_{RLB}(T)=O(\log T) + O( d \sqrt{T-\log T})$. Similarly, for $\mathcal{H}_1$, $R_{RLB}(T)=O(\log T) + O(\sqrt{N(T-\log T)})$, as shown in Table~\ref{table:regret_algo}. 
\section{Simulation Results}  \label{sec:simulation}

\subsection{Synthetic Data}

 In this setup, we assume, $N=1000$, $d=20$ and $k=50$. $\lambda$ and $ R $ are taken as $0.001$ and $0.1$ respectively. Context vectors and mean rewards are generated at random (in the range $[0,1]$). All high probability events are simulated with an error probability of $0.001$. The simulation is run for 1000 instances and cumulative regret is shown in Figure~\ref{fig:regret_plot}.

\begin{figure}[t!]

\centering
\subfigure[]{\includegraphics[height=1.2in,width=1.6in]{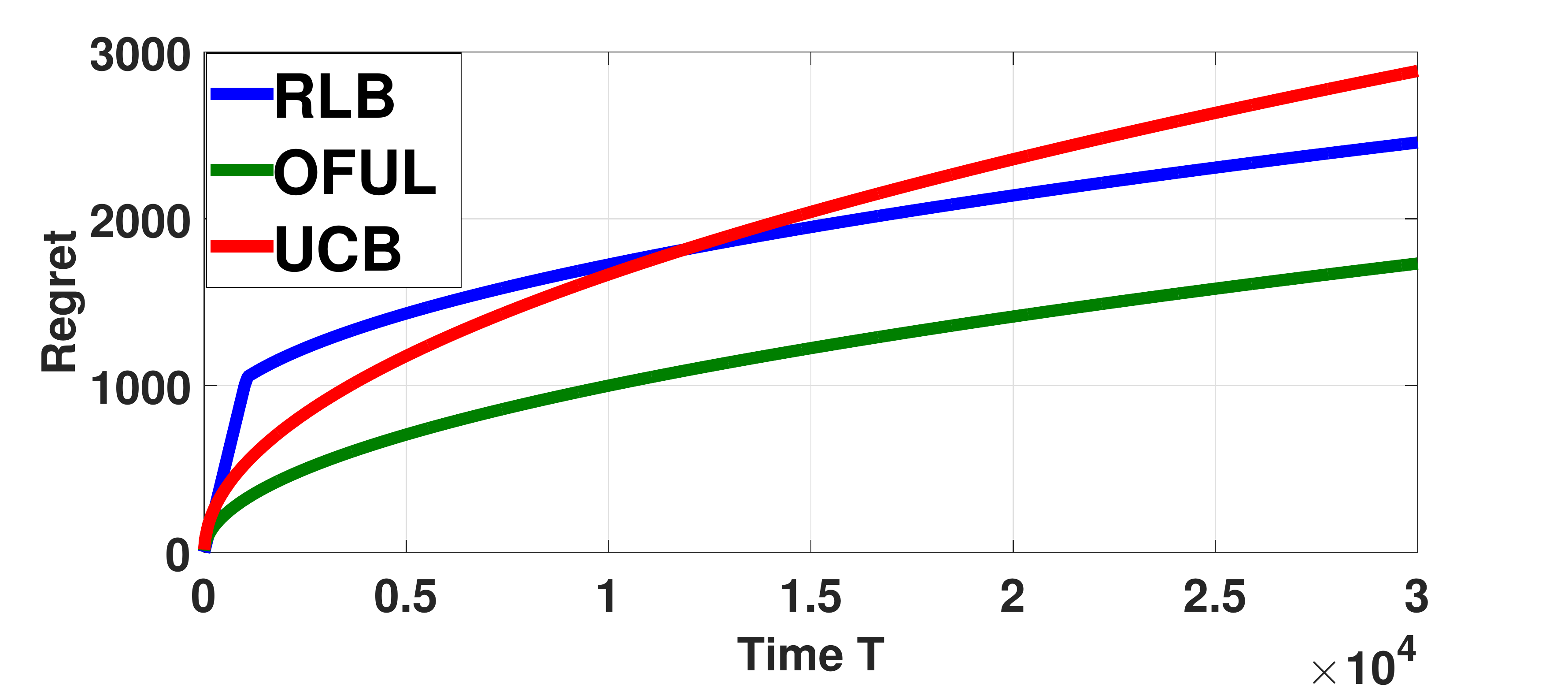}} 
\subfigure[]{\includegraphics[height=1.2in,width=1.6in]{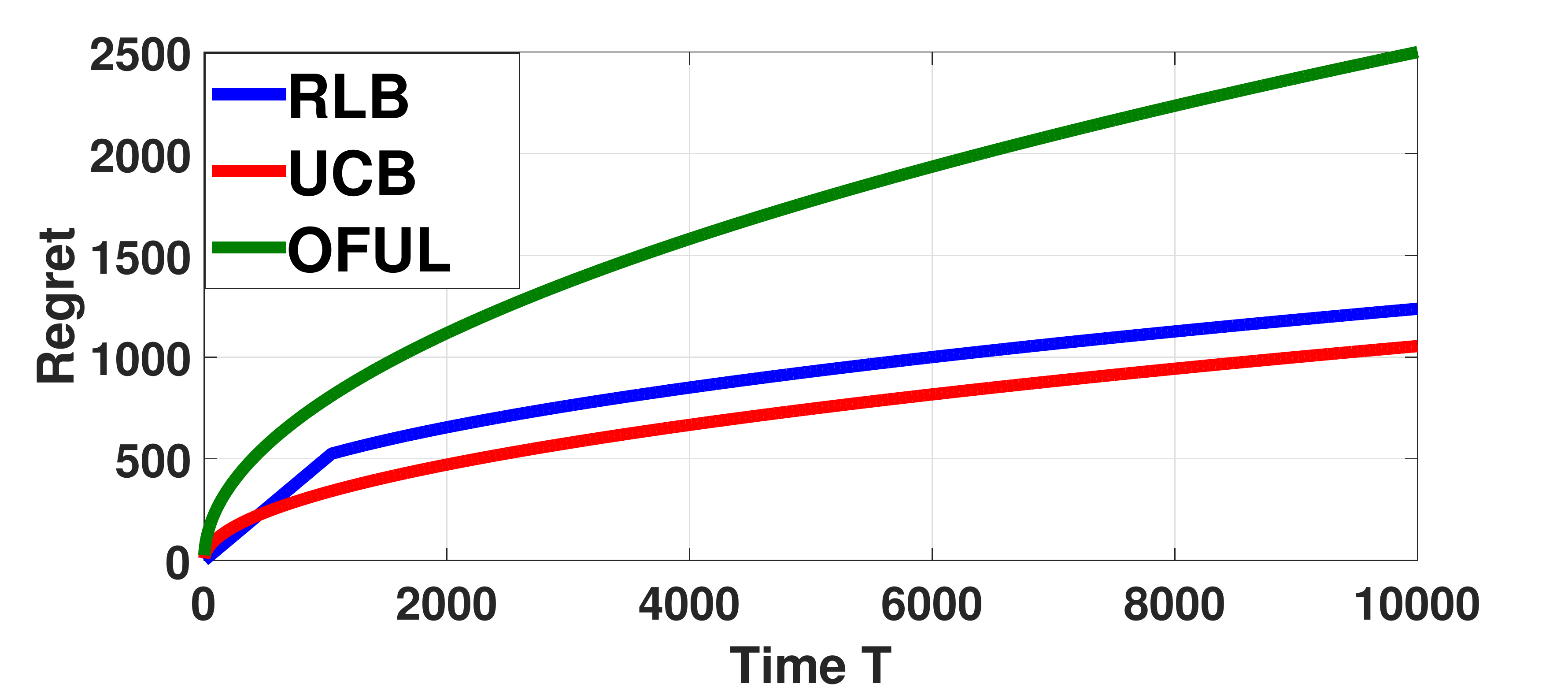}}

 \caption{Regret variations with synthetic data. Figure (a) represents the scenario with 0 deviation, thus regret of RLB follows that of OFUL. In (b), where deviation is non-sparse, RLB avoids the high regret of OFUL and follows UCB.}
\label{fig:regret_plot} 

\end{figure}

 Under $\mathcal{H}_0$, RLB predicts correctly with a probability of false alarm $0.0001$. Figure~\ref{fig:regret_plot} shows the regret performance of RLB. In the sampling phase, regret is linear and thus greater than the perturbed OFUL and UCB algorithm. After the sampling phase, regret of RLB closely follows regret of OFUL with probability $0.9999$. The false alarm probabillity can be further pushed if the value of $k$ is increased. If we allow time horizon $T$ to be very large, the deviation in terms of regret between UCB and RLB will be significantly large.
 
 The same experiment is carried for $\mathcal{H}_1$ with $|\epsilon_i| > 2$ for all $i \in \lbrace 1,2,\ldots,N \rbrace $, and RLB verdicts in favor of UCB with an error (miss detection) of $0.0001$. Figure~\ref{fig:regret_plot} shows the variation of regret with time. Further, if $k$ is increased, the error decreases but the regret from sampling phase increases. 
 
 \subsection{Yahoo! Learning to Rank Data}
  The performance of RLB is evaluated on the Yahoo!
 dataset ``Learning to Rank Challenge''
 \cite{chapelle2011yahoo}. Specifically, we use the file
 {\footnotesize {\tt set2.test.txt}}. The dataset consists of query
   document instance pairs with $103174$ rows and $702$ columns. The first column lists rating given by user (which
   we take as reward) with entries $\lbrace 0,1,2,3,4 \rbrace$ and
   the second column captures user id. We treat the rest $700$ columns as context vector corresponding to each user. We select $20,000$ rows and $50$ columns
   at random (similar results were found for several random
   selections). We cluster the data using $\mathcal{K}$-means
   clustering with $\mathcal{K}=500$. Each cluster can be treated as a
   bandit arm with mean reward equal to the empirical mean of the
   individual rating in the cluster and context (or feature) vector
   equals to the centroid of the cluster. Thus, we have a bandit
   setting with $N=500$, $d=50$.

To show that the obtained data does not fall in $\mathcal{H}_0$ (i.e., linear model), we fit a linear regression model. It is observed that, average value of residuals (error) is $ 0.15 $ (with a maximum value of $0.67$), where average mean reward is $1.13$. Therefore, we conclude that the data falls under $\mathcal{H}_1$. We run OFUL, UCB and RLB on the dataset and regret performance is shown in Figure~\ref{fig:regret_plot_yahoo}. We consider the following cases:

\begin{enumerate}
\item $k=70$: we conclude that all arms are sufficiently sampled and thus RLB avoids high regret of OFUL and plays UCB. But RLB suffers high regret upto $3570$ rounds.
\item $k=10$: arms are not properly sampled, leading to an increase in the radius $I_s$ and violating the lower bound on $k$. Owing to this, RLB plays OFUL and incurs high regret.
\end{enumerate}

\begin{figure}[t!]

\centering
\subfigure[]{\includegraphics[height=1.2in,width=1.6in]{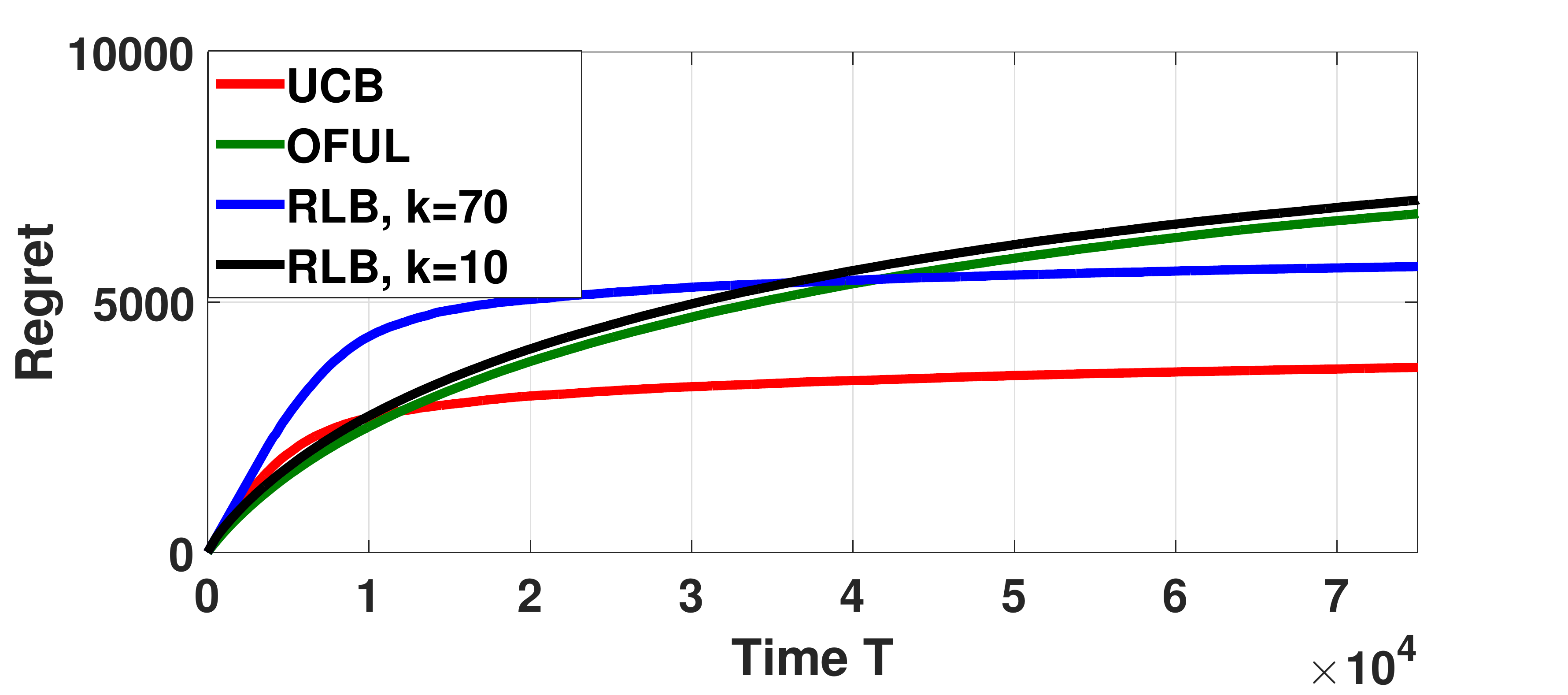}}\subfigure[]{\includegraphics[height=1.2in,width=1.6in]{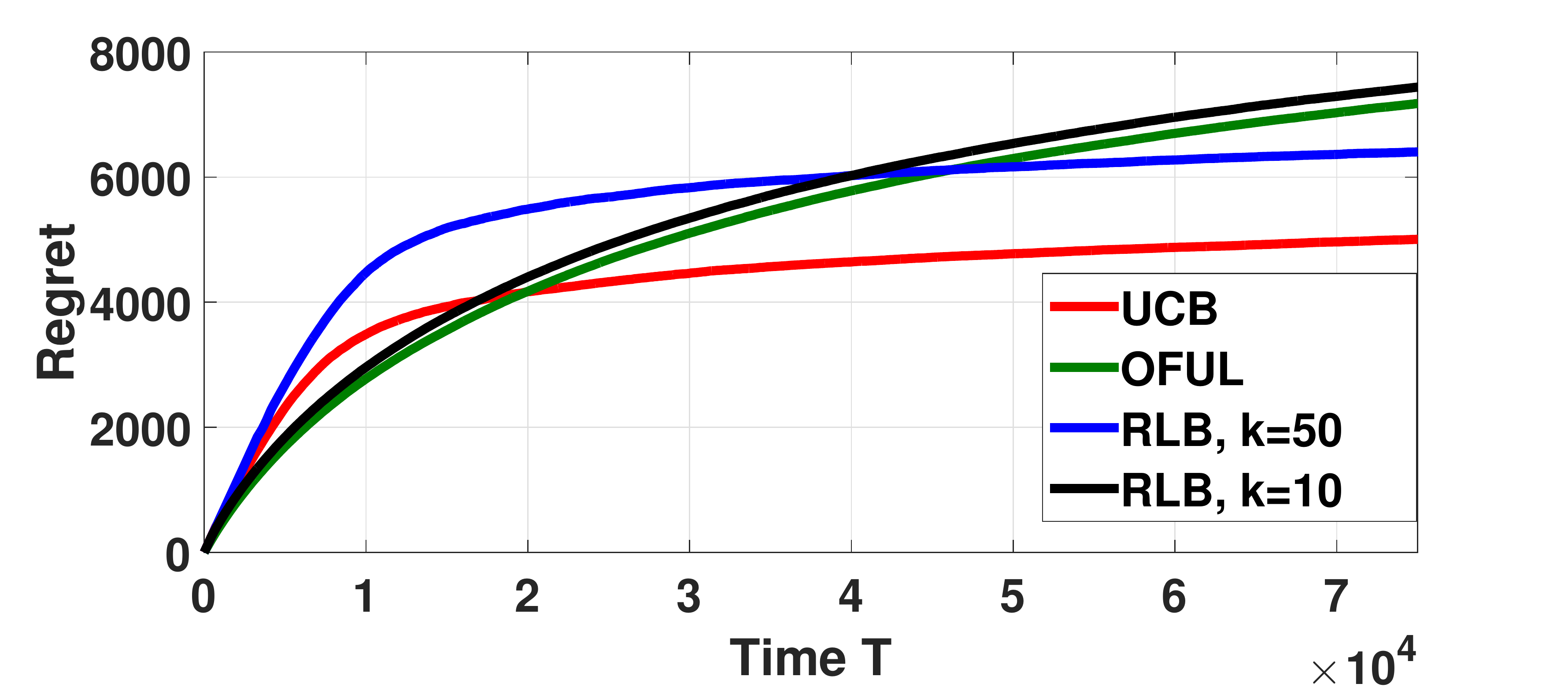}}

 \caption{RLB on Yahoo! data: In Figure (a), $N=500$, $d=50$. Regret of UCB, OFUL and RLB is plotted for different  $k$. Figure (b) denotes similar plots, with $N=800$, $d=50$.}
\label{fig:regret_plot_yahoo} 

\end{figure}

We carry out the same experiment with $\mathcal{K}=800$, i.e., $N=800$, $d=50$ and the observations are similar (Figure~\ref{fig:regret_plot_yahoo}(b)). For a reasonable value of $k$ (50 in this case), RLB properly identifies the optimal algorithm (UCB) to play, but with very low $k$ (10), RLB suffers the high regret of OFUL. We omit the errorbars as over $1000$ instances, regret values for different algorithms remain almost the same.

\section{Conclusion and Future work}
\label{sec:conclusion}

We addressed the problem of adapting to misspecification in linear
bandits. We showed that a state-of-the art linear bandit algorithm
like OFUL is not always robust to deviations away from linearity. To
overcome this, we have proposed a robust bandit algorithm and provided
a formal regret upper bound. Experiments on both synthetic and real
world datasets support our reasoning that (a) feature-reward maps can
often be far from linear in practice, and (b) employing a strategy
that is aware of potential deviation from linearity and tests for it
suitably does lead to performance gains. Moving forward, it would be 
interesting to explore other non-linearity
structures than sparse deviations as was studied here, and to derive
information-theoretic regret lower bounds for the class of general
bandit problems with given feature sets. It is also intriguing to
investigate the performance of Bayesian-inspired algorithms like Thompson
Sampling on linear bandits in presence of deviations.


\newpage

\section*{Acknowledgements}
This work was partially supported by the DST INSPIRE faculty grant IFA13-ENG-69. The authors are grateful to anonymous reviewers for providing useful comments.

\bibliographystyle{aaai}
\bibliography{ghosh}

\newpage


\newpage

\begin{lemma}
(Measure-Change Identity) Consider the probability measures $\mathbb{P}_{lin} [\cdot]$ and $\mathbb{P}_{pert}[\cdot]$ under linear and perturbed bandit model respectively. For any discrete random variable $Y$,
 \begin{equation}
 \mathbb{E}_{pert}[Y] = \mathbb{E}_{lin} \big [ Y \prod_{i=1}^{N_x(T)} \frac{\mathbb{P}_{pert}(r_{x,i})}{\mathbb{P}_{lin}(r_{x,i})} \big ] \nonumber
 \end{equation}

where the arm with feature vector $x$, which is sub-optimal under linear model is boosted to become optimal under perturbed model and $N_x(T)$ denotes the number of plays of the arm upto time $T$. $\mathbb{P}_{[\cdot]}(r_{x,i})$ denotes the probability of obtaining reward $r_x$ in $i$-th round under associated probability measures.
\label{lem:measure_change_identity}
\end{lemma}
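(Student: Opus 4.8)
The plan is to exhibit the likelihood ratio of $\mathbb{P}_{pert}$ with respect to $\mathbb{P}_{lin}$ on the interaction history explicitly: write the joint law of the sequence of actions and rewards under each model as a product of policy kernels and per-arm reward kernels, note that the two models share \emph{both} the policy of the fixed algorithm $\mathbb{A}$ and every per-arm reward law \emph{except} that of the boosted arm with feature $x$, and observe that all matching factors cancel in the ratio, leaving a telescoping product over exactly the rounds in which $x$ was pulled.

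First I would fix $\mathbb{A}$ and the horizon $T$ and write $\mathcal{H}_T=(A_1,Y_1,\ldots,A_T,Y_T)$ for the history. Since $\mathbb{A}$ chooses $A_t$ from $\mathcal{H}_{t-1}$ together with internal randomness independent of the reward noise, for any bandit model with discrete per-arm reward pmfs $\{\nu_a\}_{a\in\mathcal{A}}$ the pmf of a realization $h=(a_1,y_1,\ldots,a_T,y_T)$ factorizes as $\prod_{t=1}^{T}\pi_t(a_t\mid h_{t-1})\,\nu_{a_t}(y_t)$, where the action kernel $\pi_t(\cdot\mid h_{t-1})$ induced by $\mathbb{A}$ does not depend on $\{\nu_a\}$. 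Specializing to $\mathbb{P}_{lin}$ (reward pmfs $\nu_a^{lin}$) and $\mathbb{P}_{pert}$ (reward pmfs $\nu_a^{pert}$), where by the construction underlying Theorem~\ref{thm:general_lower_bound} we have $\nu_a^{pert}=\nu_a^{lin}$ for all $a$ except the boosted arm, whose law $\nu_x^{pert}$ is absolutely continuous with respect to $\nu_x^{lin}$ (same support, perturbed mean), the ratio of the two factorized pmfs cancels every policy factor and every reward factor with $a_t\neq x$, leaving, for every $h$ with $\mathbb{P}_{lin}(\mathcal{H}_T=h)>0$,
\[
\frac{\mathbb{P}_{pert}(\mathcal{H}_T=h)}{\mathbb{P}_{lin}(\mathcal{H}_T=h)}=\prod_{t\,:\,a_t=x}\frac{\nu_x^{pert}(y_t)}{\nu_x^{lin}(y_t)}=\prod_{i=1}^{N_x(T)}\frac{\mathbb{P}_{pert}(r_{x,i})}{\mathbb{P}_{lin}(r_{x,i})}=:L_T(h),
\]
where $r_{x,i}$ denotes the reward observed on the $i$-th pull of the arm with feature $x$, matching the notation of the statement.

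Finally, for any $\mathcal{H}_T$-measurable discrete random variable $Y=f(\mathcal{H}_T)$, I would sum this identity against $f$, restricting the sum to histories of positive $\mathbb{P}_{lin}$-mass --- which, by absolute continuity, carry all of the $\mathbb{P}_{pert}$-mass --- to obtain $\mathbb{E}_{pert}[Y]=\sum_h f(h)\mathbb{P}_{pert}(\mathcal{H}_T=h)=\sum_h f(h)\,L_T(h)\,\mathbb{P}_{lin}(\mathcal{H}_T=h)=\mathbb{E}_{lin}[Y\,L_T]$, which is the claimed identity. The one point requiring care --- the ``hard part,'' modest as it is --- is the absolute-continuity bookkeeping: one must guarantee $\mathbb{P}_{lin}(h)=0\Rightarrow\mathbb{P}_{pert}(h)=0$ so that $L_T$ is well defined on the relevant support and stays bounded, which is precisely why the perturbation in Theorem~\ref{thm:general_lower_bound} shifts the reward mean while keeping the reward distribution's support fixed rather than altering it.
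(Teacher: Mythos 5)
Your proposal is correct and follows essentially the same route as the paper's proof: both establish the sample-path likelihood ratio $\mathbb{P}_{pert}(\omega)=\prod_{i=1}^{N_x(T)}\frac{\mathbb{P}_{pert}(r_{x,i})}{\mathbb{P}_{lin}(r_{x,i})}\,\mathbb{P}_{lin}(\omega)$ by noting that only the boosted arm's reward law differs, and then sum against $Y$. Your version merely makes explicit two points the paper leaves implicit --- the cancellation of the algorithm's policy kernels in the ratio, and the absolute-continuity condition needed for the ratio to be well defined --- which is a useful tightening but not a different argument.
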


\begin{proof}
\footnotesize
\begin{eqnarray}
  \mathbb{E}_{lin} \big [ Y \prod_{i=1}^{N_x(T)} \frac{\mathbb{P}_{pert}(r_{x,i})}{\mathbb{P}_{lin}(r_{x,i})} \big ]=  \sum_{\omega \in \Omega} Y(\omega) \prod_{i=1}^{N_x(T)} \frac{\mathbb{P}_{pert}(r_{x,i})}{\mathbb{P}_{lin}(r_{x,i})} \mathbb{P}_{lin}(\omega) \nonumber 
 \end{eqnarray}
\normalsize
It can be observed that only the reward distribution of the arm with feature $x$ (i.e., $r_x$) is changed under the perturbed probability measure. Hence, for transformation from linear to perturbed probability measure, the reward terms associated with arm $x$ only should be taken care of. As the reward distributions are independent across rounds and the play is continued upto $T$ rounds, over a sample path, we can write:

\begin{eqnarray}
 \mathbb{P}_{pert}(\omega)= \prod_{i=1}^{N_x(T)} \frac{\mathbb{P}_{pert}(r_{x,i})}{\mathbb{P}_{lin}(r_{x,i})} \mathbb{P}_{lin}(\omega) \nonumber 
 \end{eqnarray}

Therefore,

\begin{eqnarray}
  \sum_{\omega \in \Omega} Y(\omega) \prod_{i=1}^{N_x(T)} \frac{\mathbb{P}_{pert}(r_{x,i})}{\mathbb{P}_{lin}(r_{x,i})} \mathbb{P}_{lin}(\omega) &=& \sum_{\omega \in \Omega} Y(\omega) \mathbb{P}_{pert}(\omega)\nonumber \\
  &=& \mathbb{E}_{pert}[Y] \nonumber 
 \end{eqnarray}

\end{proof}

\section*{Proof of Theorem~\ref{thm:general_lower_bound}}

We will construct a perturbed instance with $N$ arms (a function of $T$, to be specified later) and feature dimension $d$.

\paragraph{Construction:}
A pure linear bandit instance is constructed with $N$ bandit arms and dimension $d$ having feature vectors $x_1, x_2, \ldots, x_N$ such that $N=1+c\sqrt{T}$, ($c$ is a constant). The mean rewards on $N$ arms are denoted by $\mu_1, \mu_2, \ldots, \mu_N$. The mean reward of the optimal arm is $\mu^*$ ($=\max_{i} \mu_i$) with feature vector $x_{lin}^*$. We assume all the rewards are bounded in $[0,1]$. Define $\Delta_i= \mu^*-\mu_i$ and $\Delta=\min_i \Delta_i$. We also assume that $\Delta$ is close to $1$, i.e., the mean rewards of sub-optimal arms are close to one another and the mean reward of the optimal arm is much higher than each one of them. Under such construction, the expected regret is given by,

\begin{eqnarray}
\mathbb{E}_{lin}[R_{lin}(T)] &=& \mathbb{E}_{lin} \big [ \sum_{x_i \neq x_{lin}^*} \Delta_i N_{x_i}(T) \big ] \nonumber \\
& \leq & \mathbb{E}_{lin} \big [ \sum_{x_i \neq x_{lin}^*}  N_{x_i}(T) \big ] \label{eqn:regret_suboptimal_plays_upper_bound}
\end{eqnarray}

where $N_{x_i}(T)$ denotes the number of plays of arm with feature vector $x_i$ upto time $T$. Equation~\ref{eqn:regret_suboptimal_plays_upper_bound} follows from the fact that $\Delta_i \leq 1$ for all $i$ as the rewards are within $[0,1]$. Also,

\begin{eqnarray}
 \mathbb{E}_{lin} \big [ \sum_{x_i \neq x_{lin}^*} \Delta_i N_{x_i}(T) \big ] & \geq & \Delta \mathbb{E}_{lin} \big [ \sum_{x_i \neq x_{lin}^*}  N_{x_i}(T) \big ] \nonumber \\
 & = & \mathbb{E}_{lin} \big [ \sum_{x_i \neq x_{lin}^*}  N_{x_i}(T) \big ] \label{eqn:regret_suboptimal_plays_lower_bound}
\end{eqnarray}

From Equation~\ref{eqn:regret_suboptimal_plays_upper_bound} and \ref{eqn:regret_suboptimal_plays_lower_bound}, we have

\begin{equation}
\mathbb{E}_{lin}[R_{lin}(T)] = \mathbb{E}_{lin} \big [ \sum_{x_i \neq x_{lin}^*} N_{x_i}(T) \big ] \label{eqn:regret_suboptimal_plays}
\end{equation}

From the problem statement, algorithm \textbf{$\mathbb{A}$} suffers a regret of

\begin{equation}
\mathbb{E}_{lin} \big [ \sum_{x_i \neq x_{lin}^*} N_{x_i}(T) \big ] = O (d\sqrt{T}) \leq \alpha d\sqrt{T} \label{eqn:linear_bandit_upper_bound}
\end{equation}

where $\alpha \,\, (>0)$ is a constant. From (\ref{eqn:linear_bandit_upper_bound}), it is clear that there exists a sub-optimal arm with feature $x \neq x_{lin}^*$ such that,
\begin{equation}
\mathbb{E}_{lin} \big [ N_x(T) \big ] \leq \frac{\alpha d \sqrt{T}}{N-1}=\frac{\alpha d}{c} \nonumber
\end{equation}

From Markov's Inequality, we can write,
\begin{equation}
\mathbb{P}_{lin}(N_x(T) > a) \leq \frac{\alpha d}{c a} \nonumber
\end{equation}

Let $G_x$ be the event $\lbrace N_x (T) \leq a \rbrace$ and $G$ be the event $\lbrace R_{lin} (T) \leq  3 \alpha d \sqrt{T} \rbrace $. Choosing $a=3 d /c $, we have
\begin{equation}
\mathbb{P}_{lin}[G_x] \geq \frac{2}{3} \label{eqn:lower_bound_g_x}
\end{equation}

Also,
\begin{eqnarray}
\mathbb{P}_{lin}[G]&&=1-\mathbb{P}_{lin}[G^c] \nonumber \\
&&= 1- \mathbb{P}_{lin} \big [ R_{lin}(T) > 3 \alpha d \sqrt{T} \big ] \nonumber \\
&& \geq 1-\frac{d \alpha \sqrt{T}}{3 \alpha d \sqrt{T}}=2/3 
\label{eqn:lower_bound_g}
\end{eqnarray}

The last inequality follows directly from Markov's Inequality. From Equations~\ref{eqn:lower_bound_g_x} and \ref{eqn:lower_bound_g}, we have
\begin{equation}
\mathbb{P}_{lin}\big [ G \cap G_x \big ] \geq 1/3 \label{eqn:lower_bound_intersection}
\end{equation}

Now consider the perturbed model where arm with feature $x \neq x_{lin}^*$ is boosted to become optimal and the associated probability measure is denoted by $\mathbb{P}_{pert}[\cdot]$. We can write

\begin{eqnarray}
\mathbb{P}_{pert}[G] && \geq \mathbb{P}_{pert} \big [ G \cap G_x \big ] = \mathbb{E}_{pert} \big [ \mathrm{1}_{\lbrace G \cap G_x \rbrace} \big ] \nonumber \\
&&  \stackrel{\text{(1)}}{=} \mathbb{E}_{lin} \big [ \mathrm{1}_{\lbrace G \cap G_x \rbrace} \prod_{i=1}^{N_x(T)} \frac{\mathbb{P}_{pert}(r_{x,i})}{\mathbb{P}_{lin}(r_{x,i})} \big ] \nonumber \\
&& \stackrel{\text{(2)}}{\geq} \mathbb{E}_{lin} \big [\mathrm{1}_{\lbrace G \cap G_x \rbrace} h(a) \big ] \nonumber \\
&& = h(a) \mathbb{P}_{lin}\bigg [ G \cap G_x \big ] \nonumber\\
&& \geq (1/3) h(a) = \Omega(1)
\label{eqn:event_g_const_prob}
\end{eqnarray}

(1) follows from measure-change identity stated in Lemma~\ref{lem:measure_change_identity}. (2) can be shown as follows: $\prod_{i=1}^{N_x(T)} \frac{\mathbb{P}_{pert}(r_{x,i})}{\mathbb{P}_{lin}(r_{x,i})} \geq \prod_{i=1}^{N_x(T)} \mathbb{P}_{pert}(r_{x,i})$ and since under $G_x$, $N_x(T) \leq a$ and $\mathbb{P}_{pert}(r_{x,i}) \leq 1$, (2) follows with $h(a) =\prod_{i=1}^{a} \mathbb{P}_{pert}(r_{x,i})$. Note that the function $h(.)$ has no dependence on $T$.

In the perturbed model, the optimal arm is $x$, and hence the expected regret with respect to $x$ is,
\begin{eqnarray}
\mathbb{E}_{pert}[R_{pert}] && = \mathbb{E}_{pert}\big [ \sum_{x_i \neq x} N_{x_i}(T) \big ] \nonumber \\
&& \stackrel{\text{(1)}}{\geq} \mathbb{E}_{pert} \big [ \mathrm{1}_{\lbrace G \rbrace} \sum_{x_i \neq x} N_{x_i}(T) \big ] \nonumber \\
&& \stackrel{\text{(2)}}{\geq} \mathbb{E}_{pert} \big [ \mathrm{1}_{\lbrace G \rbrace} N_{x_{lin}^*}(T) \big ] \nonumber \\
&& \stackrel{\text{(3)}}{\geq} \mathbb{E}_{pert} \big [ \mathrm{1}_{\lbrace G \rbrace} (T-3\alpha d \sqrt{T}) \big ] \nonumber \\
&& \stackrel{\text{(4)}}{\geq} (T-3\alpha d \sqrt{T}) \Omega(1)\nonumber \\
&& = \Omega(T) \nonumber
\end{eqnarray}

where (1) is true as both $\mathrm{1}_{\lbrace G \rbrace}$ and $ \sum_{y \neq x} N_y(T)$ are non-negative random variables. (2) follows from the fact that under the perturbed model, $x_{lin}^*$ is an sub-optimal arm. Under the event $G$, $ N_{x_{lin}^*} \geq (T-3\alpha d \sqrt{T})$ and (3) follows directly. (4) is a direct consequence of Equation~\ref{eqn:event_g_const_prob}.

\section*{Proof of Theorem~\ref{lem:OFUL_large_dev}} 

Consider a linear bandit problem with
$\mathcal{A}=\lbrace 1,2 \rbrace$. Assume $d=1$, and the context matrix $\mathcal{X}=[1\; 2]$.
 Let the unknown parameter be $\theta^*$ and the mean vector be
$\mu=[\mu_1 \,\,\, \mu_2]^T$ with $\mu_2 > \mu_1$, (optimal arm is 2). All perfectly linear models lie on the solid line of Figure~\ref{fig:large_deviation}. The dotted line in Figure~\ref{fig:large_deviation} represents decision boundary. Consider the deviation vector to be $\epsilon= [\epsilon_1 \, \, \, \epsilon_2 ]^T$.

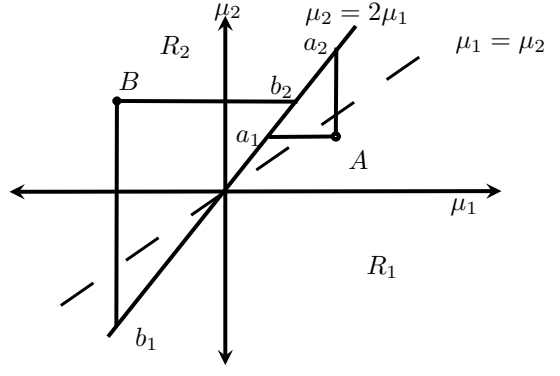
\begin{figure}[t!] 
\begin{centering}
\begin{center}

\ifx\du\undefined
  \newlength{\du}
\fi
\setlength{\du}{15\unitlength}
\begin{tikzpicture}
\pgftransformxscale{0.35000000}
\pgftransformyscale{-0.35000000}
\definecolor{dialinecolor}{rgb}{0.000000, 0.000000, 0.000000}
\pgfsetstrokecolor{dialinecolor}
\definecolor{dialinecolor}{rgb}{1.000000, 1.000000, 1.000000}
\pgfsetfillcolor{dialinecolor}
\pgfsetlinewidth{0.100000\du}
\pgfsetdash{}{0pt}
\pgfsetdash{}{0pt}
\pgfsetbuttcap
{
\definecolor{dialinecolor}{rgb}{0.000000, 0.000000, 0.000000}
\pgfsetfillcolor{dialinecolor}

\pgfsetarrowsstart{stealth}
\pgfsetarrowsend{stealth}
\definecolor{dialinecolor}{rgb}{0.000000, 0.000000, 0.000000}
\pgfsetstrokecolor{dialinecolor}
\draw (25.100000\du,2.200000\du)--(25.100000\du,27.400000\du);
}
\pgfsetlinewidth{0.100000\du}
\pgfsetdash{}{0pt}
\pgfsetdash{}{0pt}
\pgfsetbuttcap
{
\definecolor{dialinecolor}{rgb}{0.000000, 0.000000, 0.000000}
\pgfsetfillcolor{dialinecolor}

\pgfsetarrowsstart{stealth}
\pgfsetarrowsend{stealth}
\definecolor{dialinecolor}{rgb}{0.000000, 0.000000, 0.000000}
\pgfsetstrokecolor{dialinecolor}
\draw (9.550000\du,14.900000\du)--(45.000000\du,14.850000\du);
}
\pgfsetlinewidth{0.07500000\du}
\pgfsetdash{{1.000000\du}{1.000000\du}}{0\du}
\pgfsetdash{{1.000000\du}{1.000000\du}}{0\du}
\pgfsetbuttcap
{
\definecolor{dialinecolor}{rgb}{0.000000, 0.000000, 0.000000}
\pgfsetfillcolor{dialinecolor}

\definecolor{dialinecolor}{rgb}{0.000000, 0.000000, 0.000000}
\pgfsetstrokecolor{dialinecolor}
\draw (13.250000\du,23.100000\du)--(40.350000\du,4.300000\du);
}
\pgfsetlinewidth{0.100000\du}
\pgfsetdash{}{0pt}
\pgfsetdash{}{0pt}
\pgfsetbuttcap
{
\definecolor{dialinecolor}{rgb}{0.000000, 0.000000, 0.000000}
\pgfsetfillcolor{dialinecolor}

\definecolor{dialinecolor}{rgb}{0.000000, 0.000000, 0.000000}
\pgfsetstrokecolor{dialinecolor}
\draw (16.650000\du,25.350000\du)--(34.500000\du,2.950000\du);
}
\pgfsetlinewidth{0.100000\du}
\pgfsetdash{}{0pt}
\pgfsetdash{}{0pt}
\pgfsetbuttcap
\pgfsetmiterjoin
\pgfsetlinewidth{0.100000\du}
\pgfsetbuttcap
\pgfsetmiterjoin
\pgfsetdash{}{0pt}
\definecolor{dialinecolor}{rgb}{1.000000, 1.000000, 1.000000}
\pgfsetfillcolor{dialinecolor}
\pgfpathellipse{\pgfpoint{33.062500\du}{10.912500\du}}{\pgfpoint{0.212500\du}{0\du}}{\pgfpoint{0\du}{0.212500\du}}
\pgfusepath{fill}
\definecolor{dialinecolor}{rgb}{0.000000, 0.000000, 0.000000}
\pgfsetstrokecolor{dialinecolor}
\pgfpathellipse{\pgfpoint{33.062500\du}{10.912500\du}}{\pgfpoint{0.212500\du}{0\du}}{\pgfpoint{0\du}{0.212500\du}}
\pgfusepath{stroke}
\pgfsetbuttcap
\pgfsetmiterjoin
\pgfsetdash{}{0pt}
\definecolor{dialinecolor}{rgb}{0.000000, 0.000000, 0.000000}
\pgfsetstrokecolor{dialinecolor}
\pgfpathellipse{\pgfpoint{33.062500\du}{10.912500\du}}{\pgfpoint{0.212500\du}{0\du}}{\pgfpoint{0\du}{0.212500\du}}
\pgfusepath{stroke}
\pgfsetlinewidth{0.100000\du}
\pgfsetdash{}{0pt}
\pgfsetdash{}{0pt}
\pgfsetbuttcap
{
\definecolor{dialinecolor}{rgb}{0.000000, 0.000000, 0.000000}
\pgfsetfillcolor{dialinecolor}

\definecolor{dialinecolor}{rgb}{0.000000, 0.000000, 0.000000}
\pgfsetstrokecolor{dialinecolor}
\draw (33.062500\du,10.700000\du)--(33.100000\du,4.500000\du);
}
\pgfsetlinewidth{0.100000\du}
\pgfsetdash{}{0pt}
\pgfsetdash{}{0pt}
\pgfsetbuttcap
{
\definecolor{dialinecolor}{rgb}{0.000000, 0.000000, 0.000000}
\pgfsetfillcolor{dialinecolor}

\definecolor{dialinecolor}{rgb}{0.000000, 0.000000, 0.000000}
\pgfsetstrokecolor{dialinecolor}
\draw (32.850000\du,10.912500\du)--(28.250000\du,10.950000\du);
}
\pgfsetlinewidth{0.100000\du}
\pgfsetdash{}{0pt}
\pgfsetdash{}{0pt}
\pgfsetbuttcap
\pgfsetmiterjoin
\pgfsetlinewidth{0.100000\du}
\pgfsetbuttcap
\pgfsetmiterjoin
\pgfsetdash{}{0pt}
\definecolor{dialinecolor}{rgb}{1.000000, 1.000000, 1.000000}
\pgfsetfillcolor{dialinecolor}
\pgfpathellipse{\pgfpoint{17.290625\du}{8.353125\du}}{\pgfpoint{0.171875\du}{0\du}}{\pgfpoint{0\du}{0.171875\du}}
\pgfusepath{fill}
\definecolor{dialinecolor}{rgb}{0.000000, 0.000000, 0.000000}
\pgfsetstrokecolor{dialinecolor}
\pgfpathellipse{\pgfpoint{17.290625\du}{8.353125\du}}{\pgfpoint{0.171875\du}{0\du}}{\pgfpoint{0\du}{0.171875\du}}
\pgfusepath{stroke}
\pgfsetbuttcap
\pgfsetmiterjoin
\pgfsetdash{}{0pt}
\definecolor{dialinecolor}{rgb}{0.000000, 0.000000, 0.000000}
\pgfsetstrokecolor{dialinecolor}
\pgfpathellipse{\pgfpoint{17.290625\du}{8.353125\du}}{\pgfpoint{0.171875\du}{0\du}}{\pgfpoint{0\du}{0.171875\du}}
\pgfusepath{stroke}
\pgfsetlinewidth{0.100000\du}
\pgfsetdash{}{0pt}
\pgfsetdash{}{0pt}
\pgfsetbuttcap
{
\definecolor{dialinecolor}{rgb}{0.000000, 0.000000, 0.000000}
\pgfsetfillcolor{dialinecolor}

\definecolor{dialinecolor}{rgb}{0.000000, 0.000000, 0.000000}
\pgfsetstrokecolor{dialinecolor}
\draw (17.462500\du,8.353125\du)--(30.300000\du,8.400000\du);
}
\pgfsetlinewidth{0.100000\du}
\pgfsetdash{}{0pt}
\pgfsetdash{}{0pt}
\pgfsetbuttcap
{
\definecolor{dialinecolor}{rgb}{0.000000, 0.000000, 0.000000}
\pgfsetfillcolor{dialinecolor}

\definecolor{dialinecolor}{rgb}{0.000000, 0.000000, 0.000000}
\pgfsetstrokecolor{dialinecolor}
\draw (17.290625\du,8.525000\du)--(17.250000\du,24.600000\du);
}
\definecolor{dialinecolor}{rgb}{0.000000, 0.000000, 0.000000}
\pgfsetstrokecolor{dialinecolor}
\node[anchor=west] at (40.550000\du,16.000000\du){$\mu_1$};
\definecolor{dialinecolor}{rgb}{0.000000, 0.000000, 0.000000}
\pgfsetstrokecolor{dialinecolor}
\node[anchor=west] at (23.525000\du,1.892500\du){$\mu_2$};
\definecolor{dialinecolor}{rgb}{0.000000, 0.000000, 0.000000}
\pgfsetstrokecolor{dialinecolor}
\node[anchor=west] at (40.900000\du,4.300000\du){$\mu_1=\mu_2$};
\definecolor{dialinecolor}{rgb}{0.000000, 0.000000, 0.000000}
\pgfsetstrokecolor{dialinecolor}
\node[anchor=west] at (30.075000\du,2.085000\du){$\mu_2=2\mu_1$};
\definecolor{dialinecolor}{rgb}{0.000000, 0.000000, 0.000000}
\pgfsetstrokecolor{dialinecolor}
\node[anchor=west] at (33.200000\du,12.600000\du){$A$};
\definecolor{dialinecolor}{rgb}{0.000000, 0.000000, 0.000000}
\pgfsetstrokecolor{dialinecolor}
\node[anchor=west] at (34.500000\du,20.300000\du){$R_1$};
\definecolor{dialinecolor}{rgb}{0.000000, 0.000000, 0.000000}
\pgfsetstrokecolor{dialinecolor}
\node[anchor=west] at (19.575000\du,4.492500\du){$R_2$};
\definecolor{dialinecolor}{rgb}{0.000000, 0.000000, 0.000000}
\pgfsetstrokecolor{dialinecolor}
\node[anchor=west] at (16.625000\du,6.942500\du){$B$};
\definecolor{dialinecolor}{rgb}{0.000000, 0.000000, 0.000000}
\pgfsetstrokecolor{dialinecolor}
\node[anchor=west] at (25.075000\du,11.042500\du){$a_1$};
\definecolor{dialinecolor}{rgb}{0.000000, 0.000000, 0.000000}
\pgfsetstrokecolor{dialinecolor}
\node[anchor=west] at (29.9450000\du,4.437500\du){$a_2$};
\definecolor{dialinecolor}{rgb}{0.000000, 0.000000, 0.000000}
\pgfsetstrokecolor{dialinecolor}
\node[anchor=west] at (17.825000\du,25.532500\du){$b_1$};
\definecolor{dialinecolor}{rgb}{0.000000, 0.000000, 0.000000}
\pgfsetstrokecolor{dialinecolor}
\node[anchor=west] at (27.550000\du,7.327500\du){$b_2$};
\end{tikzpicture}

\end{center}
\end{centering}

\caption{A counter-example with $N=2$, $d=1$ with $\mathcal{X}=[1\; 2]$ and $\mu=[\mu_1 \,\,\, \mu_2]^T$. All linear models will lie on the solid line $\mu_2=2\mu_1$. The dotted line $\mu_2=\mu_1$ is the decision boundary which partitions the space spanned by $\mu_1$ and $\mu_2$ in 2 regions: $R_1$ (arm 1 is optimal) and $R_2$ (arm 2 is optimal). $A$ and $B$, not lying on $\mu_2=2\mu_1$ line, are taken for analyzing the behavior of OFUL.}
\label{fig:large_deviation}

\end{figure}

Now, consider the situation where the bandit model is non-linear with small deviation and mean vector $\mu$ is located at $A$. If  OFUL is played, the initial confidence interval will be $[a_1, \,\,\, a_2]$ (Figure~\ref{fig:large_deviation}) (obtained from the confidence set construction by OFUL). The confidence interval entirely lies in $R_2$  where the optimal action is to play arm $2$. Now, even if the player plays the sub-optimal arm (arm 1) for a few initial stages, the confidence interval will be concentrated around $a_1$ which is in region $R_2$ and thus the verdict of the OFUL will be to play arm $2$ which will result in zero regret at each step.

Now consider the scenario where the mean vector $\mu$ is located at $B$. The deviation vector $\epsilon$ is large with $|\epsilon_i| > c$ ($c >0$) for $i=1,2$. Thus for this instance, $l=c$ and $\beta=0$ and therefore $\mu$ is consistent with Definition~\ref{def:sparse}. The initial confidence interval obtained from the OFUL will be $[b_1,\,\,\, b_2]$, a part of which lies both in region $R_1$ and $R_2$. If the player starts playing arm $1$ (sub optimal) in the initial few rounds, the confidence interval starts concentrating around $b_1$ which lies in region $R_1$ and thus the suggestion of OFUL will be to play arm $1$. If player plays arm $1$, the confidence interval will further get concentrated around $b_1$ and the player will end up playing sub-optimal arm all the time. The regret incurred in this process is $\Omega(T)$ as the player incurs positive regret at each step.

The behavior of OFUL under the above setup of the counterexample can
be shown rigorously by the following formal proof:

In the current setup, we have $N=2$, $d=1$, context matrix $X= [x_1 \,\,\, x_2] = [1 \,\,\, 2]$, mean reward vector $[\mu_1 \,\,\, \mu_2]$ and unknown parameter $\theta^*$. Consider deviation vector $\epsilon= [\epsilon_1 \,  
\, \, \epsilon_2 ]^T$. Assuming linear model with deviation, $\mu_1 = \theta^* + \epsilon_1$ and $\mu_2= 2\theta^* + \epsilon_2$. We assume that $\mu_2 > \mu_1$, i.e., the optimal action is to play arm 2. 

Let us assume that, upto time $t$, arm $1$ and $2$ are played $N_1$ and $N_2$ times respectively (i.e., $N_1+N_2=t$). Based on the observations upto time $t$, if the learner plays OFUL, arm 2 will be played in the $t+1$ th instant if,

\begin{eqnarray}
&&\max_{\theta \in C_t} x_2^T\theta > \max_{\theta \in C_t} x_1^T \theta \nonumber \\
\Rightarrow &&\max_{\theta \in C_t} 2\theta > \max_{\theta \in C_t} \theta \label{eqn:OFUL_play_cond}
\end{eqnarray}

where the unknown parameter $\theta^*$ lies (with probability at least $1 - \tilde{\delta}$) in the confidence interval given by,
\footnotesize
 \begin{eqnarray}
&&C_t = \bigg \lbrace \theta \in \mathbb{R} : \norm {\hat{\theta}_t- \theta}_{\bar{V}_t} \leq  \lambda^{1/2}S +  \nonumber  \\
&& R \sqrt{2 \log  (\frac{\det(\bar{V}_t)^{1/2} \det (\lambda I)^{-1/2}}{\tilde{\delta}})} +  \norm {\mathbf{X}_{1:t}^T \epsilon_{1:t}}_{\bar{V}_t^{-1}} \bigg \rbrace \nonumber
\end{eqnarray}
\normalsize
where $\bar{V_t} = \lambda  + \sum_{s=1}^t x_s^2= \lambda + N_1 + 4N_2$, $\mathbf{X}_{1:t}^T=[x_{A_1},x_{A_2},\ldots,x_{A_t}]$, $\epsilon_{1:t}=[\epsilon_{A_1},\epsilon_{A_2},\ldots,\epsilon_{A_t}]^T$ and $A_i$ denote the action taken at time $i$. $\lambda$ denotes the regularization parameter and can be chosen in such a way that, $\lambda \ll N_1 +4N_2$. Thus, $\bar{V_t} \approx N_1 + 4N_2$.

Now, for $d=1$, the confidence interval $C_t$, 
\begin{equation}
\norm {\hat{\theta}_t- \theta}_{\bar{V}_t}= \sqrt{\bar{V_t}}|\theta - \hat{\theta_t}| \leq r \nonumber
\end{equation}
where, \\
\footnotesize
$r=\frac{1}{\sqrt{\bar{V_t}}} \bigg ( \lambda^{\frac{1}{2}}S+  R \sqrt{2 \log  (\frac{\bar{V}_t^{1/2}  \lambda^{-1/2}}{\tilde{\delta}})} +  \norm {\mathbf{X}_{1:t}^T \epsilon_{1:t}}_{\bar{V}_t^{-1}} \bigg ).$

 \begin{eqnarray}
 \norm {\mathbf{X}_{1:t}^T \epsilon_{1:t}}_{\bar{V}_t^{-1}} &= &\frac{1}{\bar{V_t}}|\sum_{i=1}^{t} x_{A_i}\epsilon_{A_i}| =  \frac{1}{\bar{V_t}} (N_1 \epsilon_1 + 4N_2 \epsilon_2) \nonumber
 \end{eqnarray}
 \normalsize

From Equation~\ref{eqn:OFUL_play_cond}, in order to play the sub-optimal arm (arm 1), the entire interval $[\hat{\theta_t}-r, \hat{\theta_t}+r]$ should lie in the negative side of the real line. Now, as $r>0$, it is sufficient to have $ \hat{\theta_t}+r < 0 $. Therefore,
\begin{eqnarray}
\mathbb{P}(\mbox{Play Arm 2})=1-\mathbb{P}(\mbox{Play Arm 1}) \nonumber \\
=1-\mathbb{P}(\hat{\theta_t}+r < 0 ) = \mathbb{P}(\hat{\theta_t}+r \geq 0) \nonumber
\end{eqnarray}

From the least square solution for $\hat{\theta_t}$,
\footnotesize
\begin{eqnarray}
\hat{\theta_t}&=&\frac{1}{(x_{A_1}^2+\ldots + x_{A_t}^2 + \lambda)} \bigg [(x_{A_1}^2+\ldots + x_{A_t}^2)\theta^* \nonumber \\  
&+& \sum_{i=1}^t x_{A_i}\epsilon_{A_i}+ \sum_{i=1}^t x_{A_i}\eta_i \bigg ] \nonumber \\
& \leq &\frac{N_1 + 4N_2}{N_1 +4N_2 + \lambda}\theta^* + \frac{N_1 \epsilon_1 + 4 N_2 \epsilon_2}{N_1 + 4N_2}+ \frac{\max x_{A_i}}{N_1 + 4N_2}\sum_{i=1}^t \eta_i \nonumber \\
&\approx &\theta^* +\frac{N_1 \epsilon_1 + 4 N_2 \epsilon_2}{N_1 + 4N_2}+ \frac{2}{N_1 + 4N_2}\sum_{i=1}^t \eta_i \nonumber
\end{eqnarray}
\normalsize

Following the argument of Section~\ref{subsec:linear_regret}, if arm 1 (sub-optimal arm) is played upto time $t$ almost all the time, we have $N_1 \rightarrow t$ and $N_2 \rightarrow 0$. Under such limiting condition,
\begin{equation}
\hat{\theta}_t \leq \theta^* + \epsilon_1 + \frac{2}{t}\sum_{i=1}^t \eta_i = \mu_1 + \frac{2}{t}\sum_{i=1}^t \eta_i \nonumber
\end{equation}
Therefore,
\begin{eqnarray}
&& \mathbb{P}(\hat{\theta_t} \geq -r) \leq \mathbb{P}(\mu_1 + \frac{2}{t}\sum_{i=1}^t \eta_i \geq -r) \nonumber \\
&& = \mathbb{P} \bigg (\frac{1}{t}\sum_{i=1}^t \eta_i > \frac{1}{2}(-r-\mu_1) \bigg ) \label{eqn:lower_bound_on_t}
\end{eqnarray}

Under the limiting condition, (i.e., $N_1 \rightarrow t$, $ N_2 \rightarrow 0$),
\begin{eqnarray}
r=\frac{1}{\sqrt{N_1}}\bigg (\lambda^{\frac{1}{2}} S + R\sqrt{2 \log (\frac{N_1^{\frac{1}{2}}}{\tilde{\delta} \lambda^{\frac{1}{2}}})} + \epsilon_1 \bigg ) \nonumber
\end{eqnarray}

As depicted in Figure~\ref{fig:large_deviation} (e.g. point B), for large deviation and small $\lambda$,
\begin{equation}
r \approx \frac{\epsilon_1}{\sqrt{N_1}} \approx \frac{\epsilon_1}{\sqrt{t}} \nonumber
\end{equation}

Now, consider point B of Figure~\ref{fig:large_deviation}. We observe that $\mu_1 < 0$. If $ t >  \frac{\epsilon_1^2}{\mu_1^2}\bydef T'$, the right hand side of the inequality in Equation~\ref{eqn:lower_bound_on_t} is positive and hence using Hoeffding's inequality for sub-gaussian random variables with zero mean, we have,
\begin{eqnarray}
\mathbb{P}(\hat{\theta_t} \geq -r) \leq \exp \bigg (-\frac {t (r+\mu_1)^2}{8R^2} \bigg ) \bydef \delta_{dev}(\epsilon_1)
\end{eqnarray}

As, $r$ is large, $\delta_{dev}(\epsilon_1)$ is considerably small and thus the probability of playing the optimal arm is significantly small. 

Hence, under the condition that the sub-optimal arm is played almost all the time for a minimum of $T'$ rounds (from the beginning), the learner is stuck with the sub-optimal arm with high probability.

When the sub-optimal arm (arm 1) is played, OFUL suffers a regret of $\mu_2 -\mu_1$, and thus the expected regret is,

\footnotesize
\begin{eqnarray}
\mathbb{E}(R_{OFUL}(T))&=& \mathbb{E}(R_{OFUL}(T'))+ \mathbb{E}(R_{OFUL}(T-T')) \nonumber \\
&=&(\mu_2 -\mu_1)T'+\mathbb{E}(R_{OFUL}(T-T')) \nonumber \\
 &\geq & (\mu_2 -\mu_1)T' + \sum_{l=T'+1}^{T} (1- \delta_{dev}(\epsilon_1))(\mu_2 -\mu_1) \nonumber
\end{eqnarray}
\normalsize
 which, when $\delta_{dev}(\epsilon_1)$ is very close to $0$ gives,
 \begin{equation}
 \mathbb{E}(R_{OFUL}(T))=\Omega(T). \nonumber
 \end{equation}
  
\section*{Auxiliary Lemmas for the Main Result (Theorem~\ref{thm:total_regret})} \label{subsec:perf_analysis}

We will analyze the performance of the RLB algorithm under $\mathcal{H}_0$ and $\mathcal{H}_1$. In particular we will show that the probability of deciding if favor of $\mathcal{H}_1$ where $\mathcal{H}_0$ is true, $\mathbb{P}(\mathcal{H}_1;\mathcal{H}_0)$, is very low. This is called the probability of false alarm. Similarly, if $\mathcal{H}_1$ is true, the intention is to show that $\mathbb{P}(\mathcal{H}_0;\mathcal{H}_1)$ is very small, or in other words, the miss detection is very small.

\subsection*{Analysis for $\mathcal{H}_0$:} \label{subsubsec:analysis_hyp0}

We use the notations of Section~\ref{subsec:conf_ellipsoid}. From Algorithm~\ref{algo:proposed_scheme}, RLB assumes a perfect linear model (i.e.  $\mathcal{H}_0$) to build the confidence interval. Lemma~(\ref{lem: conf_interval_hyp0}) provides the interval construction technique, center and length.

\begin{lemma}
\label{lem: conf_interval_hyp0}
Under $\mathcal{H}_0$, $\forall \theta \in C $, with $\hat{\theta}$ as an estimate of $\theta^*$, the projection of the Confidence Ellipsoid $C$ onto the context of $d+1$-th arm will result in an interval, $I_e= \big [ \left\langle x_{d+1}, \hat{\theta} \right\rangle - D \norm {x_{d+1}}_{\bar{V}^{-1}} \, , \, \left\langle x_{d+1}, \hat{\theta} \right\rangle + D \norm {x_{d+1}}_{\bar{V}^{-1}} \big ]$, where $D=R  \sqrt{2 \log  (\frac{\det(\bar{V})^{1/2} \det (\lambda I)^{-1/2}}{\tilde{\delta}})} + \lambda^{1/2}S $ and $\bar{V}=\lambda I + k \sum_{i=1}^{d} x_i x_i^T$.
\end{lemma}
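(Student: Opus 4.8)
The plan is to treat $C$ as a solid ellipsoid centred at $\hat{\theta}$ and observe that projecting it onto a fixed direction is a one-dimensional linear optimisation solved exactly by Cauchy--Schwarz. First I would reparametrise: since $\lambda>0$ the matrix $\bar{V}=\lambda I + k\sum_{i=1}^d x_i x_i^T$ is positive definite, so $\bar{V}^{-1/2}$ is well defined, and every $\theta\in C$ can be written as $\theta=\hat{\theta}+\bar{V}^{-1/2}u$ with $\norm{u}_2\le D$ (because $\norm{\hat{\theta}-\theta}_{\bar{V}}^2 = u^T u$), where $D=R\sqrt{2\log(\det(\bar{V})^{1/2}\det(\lambda I)^{-1/2}/\tilde{\delta})}+\lambda^{1/2}S$ is exactly the radius from the Step~2 construction.

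Next I would compute the image of $C$ under the linear functional $\theta\mapsto\left\langle x_{d+1},\theta\right\rangle$. Substituting the parametrisation gives $\left\langle x_{d+1},\theta\right\rangle = \left\langle x_{d+1},\hat{\theta}\right\rangle + \left\langle \bar{V}^{-1/2}x_{d+1}, u\right\rangle$. As $u$ ranges over the Euclidean ball of radius $D$, the second term ranges over $[-D\,\norm{\bar{V}^{-1/2}x_{d+1}}_2,\; D\,\norm{\bar{V}^{-1/2}x_{d+1}}_2]$: the extreme values follow from Cauchy--Schwarz and are attained at $u=\pm D\,\bar{V}^{-1/2}x_{d+1}/\norm{\bar{V}^{-1/2}x_{d+1}}_2$, and every intermediate value is attained since the ball is connected and the functional continuous (equivalently, $C$ is convex and the image of a convex set under a linear map is an interval). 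Finally I would rewrite $\norm{\bar{V}^{-1/2}x_{d+1}}_2 = \sqrt{x_{d+1}^T\bar{V}^{-1}x_{d+1}} = \norm{x_{d+1}}_{\bar{V}^{-1}}$, which yields precisely the claimed interval $I_e$ with centre $\left\langle x_{d+1},\hat{\theta}\right\rangle$ and half-length $D\,\norm{x_{d+1}}_{\bar{V}^{-1}}$.

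There is no substantive obstacle here; the statement is essentially a geometric identity. The only points requiring care are that $\bar{V}$ is genuinely invertible (guaranteed by $\lambda>0$) and that the extremal inner product is \emph{achieved} inside the ball, so that the projection is a closed interval and not merely contained in one. The validity of $D$ and the fact that $\theta^*\in C$ with probability at least $1-\tilde{\delta}$ are inherited verbatim from the confidence-set construction following \cite{AbbYadPS11} and play no role in this lemma.
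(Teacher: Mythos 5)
Your proof is correct, and it reaches the stated interval by a somewhat different route than the paper. The paper sets up the maximisation of $\left\langle x_{d+1},\theta\right\rangle$ over $C$ as a constrained optimisation problem, forms the Lagrangian $\mathcal{L}=\left\langle x_{d+1},\theta\right\rangle+\gamma\big[\norm{\theta-\hat{\theta}}_{\bar{V}}^2-D^2\big]$, and extracts the maximiser from the first-order condition $x_{d+1}=2\gamma\bar{V}(\theta-\hat{\theta})$ together with the active constraint; the minimum is handled symmetrically. You instead whiten the ellipsoid via $\theta=\hat{\theta}+\bar{V}^{-1/2}u$ with $\norm{u}_2\le D$ and apply Cauchy--Schwarz to $\left\langle\bar{V}^{-1/2}x_{d+1},u\right\rangle$, identifying $\norm{\bar{V}^{-1/2}x_{d+1}}_2=\norm{x_{d+1}}_{\bar{V}^{-1}}$. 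The two computations are equivalent in content, but yours is more elementary (no multiplier to solve for) and is slightly more complete: by invoking convexity/connectedness you establish that the image of $C$ under the linear functional is exactly the closed interval $I_e$, i.e.\ every intermediate value is attained, whereas the paper's argument only exhibits the two endpoints. Your closing remarks on invertibility of $\bar{V}$ and on the probabilistic validity of $C$ being irrelevant to this purely geometric identity are also accurate.
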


\begin{proof}
In order to obtain the maximum value of the projection of $\theta$ onto the context vector corresponding to the $d+1$ th arm, $x_{d+1}$, where $\theta \in  C$,  we need to solve the following constrained optimization problem:
\begin{eqnarray}
\max_{\theta} \left\langle x_{d+1}, \theta \right\rangle  \nonumber \\ \nonumber
s.t. \norm {\theta - \hat{\theta}}_{\bar{V}} < D
\end{eqnarray}

The constraint can be re-written as, $\norm {\theta - \hat{\theta}}_{\bar{V}}^2 < D^2$. Define the Lagrangian $\mathcal{L}=\left\langle x_{d+1}, \theta \right\rangle + \gamma [\norm {\theta - \hat{\theta}}_{\bar{V}}^2 - D^2]$, where $\gamma$ is Lagrange multiplier. From the first order conditions, $\frac{\partial \mathcal{L}}{\partial \theta}=0$, we get $x_{d+1}=2\gamma \bar{V}(\theta - \hat{\theta})$, and $\gamma$ is computed by substituting the value of $x_{d+1}$ and pushing the inequality constraint to the exterior point, where it becomes an equality constraint.

Substituting the value of $\alpha$ and re-arranging the terms, we obtain the following solution:
\begin{equation}
\max_{\theta} \left\langle x_{d+1}, \theta \right\rangle =\left\langle x_{d+1}, \hat{\theta} \right\rangle +D\norm {x_{d+1}}_{\bar{V}^{-1}}  \label{eqn:max_projection}
\end{equation}

Similarly, in order to obtain the minimum length of projection, we have to solve, $\min_{\theta} \left\langle x_{d+1}, \hat{\theta} \right\rangle$ with same constraint and solution will be given by:

\begin{equation}
\min_{\theta} \left\langle x_{d+1}, \theta \right\rangle =\left\langle x_{d+1}, \theta \right\rangle -D\norm {x_{d+1}}_{\bar{V}^{-1} } \label{eqn:min_projection}
\end{equation}

Combining Equations~(\ref{eqn:max_projection}) and (\ref{eqn:min_projection}), the result follows.
\end{proof}

 Now, with respect to the algorithmic description in Section~\ref{subsec:conf_ellipsoid}, we will compare the interval resulting from Hoeffding's inequality (based on the mean estimate of the reward of $d+1$ th arm), $I_s$ with this projected interval $ I_e$. $I_e$ and $I_s $ are both symmetric intervals centered at $\left\langle x_{d+1}, \hat{\theta} \right\rangle$ and $\hat{\mu}_{d+1}=\left\langle x_{d+1}, \theta^* \right\rangle $ and having radius $D\norm {x_{d+1}}_{\bar{V}^{-1}} \bydef r_p $ and $r_s$ ($= \sqrt{\frac{\log(1/\delta_s)}{2k}}$, Section~\ref{subsec:sampling_interval}) respectively. Motivated by the UCB algorithm, the intervals $I_e$ and $I_s$ are boosted by a factor of $\sqrt{\log k}$. This is done in order to have a control over the probability of false alarm and miss alarm via parameter $k$. The resulting new radii of the intervals are $\tilde{r}_p \bydef r_p \sqrt{\log k}$ and $\tilde{r}_s \bydef r_s \sqrt{\log k}$ respectively. Lemma~(\ref{lem:low_false_alarm}) provides a theoretical guarantee for a very low and tune-able (as a function of $k$)  $\mathbb{P}(\mathcal{H}_1;\mathcal{H}_0)$.

\begin{lemma}
Under $\mathcal{H}_0$, for any $\delta_1(k,\lambda) >0$,

\begin{eqnarray}
&&\mathbb{P}(I_e \cap I_s = \phi) \leq \delta_1(k,\lambda), \nonumber \\
&& \delta_1(k,\lambda) \bydef \exp \bigg( - \frac{k (\tilde{r}_s + r_p (\sqrt{\log k} -1))^2}{2R^2} \bigg ). \nonumber 
\end{eqnarray}

\label{lem:low_false_alarm}
\end{lemma}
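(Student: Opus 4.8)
The plan is to reduce the non-intersection event to a one-dimensional deviation of the empirical mean $\hat\mu_{d+1}$ and then apply a sub-Gaussian tail bound. First, recall from the discussion preceding the lemma that $I_e$ and $I_s$ are symmetric intervals centred at $\langle x_{d+1},\hat\theta\rangle$ and $\hat\mu_{d+1}$ with boosted radii $\tilde r_p = r_p\sqrt{\log k}$ and $\tilde r_s = r_s\sqrt{\log k}$ respectively. Two symmetric intervals are disjoint exactly when the distance between their centres exceeds the sum of their radii, so $I_e\cap I_s=\phi$ if and only if $|\langle x_{d+1},\hat\theta\rangle-\hat\mu_{d+1}| > \tilde r_p + \tilde r_s$, and it suffices to bound the probability of this event.

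Next I would introduce the true projected value, which under $\mathcal{H}_0$ equals $\mu_{d+1}=\langle x_{d+1},\theta^*\rangle$, and write $\langle x_{d+1},\hat\theta\rangle-\hat\mu_{d+1} = \langle x_{d+1},\hat\theta-\theta^*\rangle - (\hat\mu_{d+1}-\mu_{d+1})$. On the event $\{\theta^*\in C\}$, which holds with probability at least $1-\bar\delta$ by the confidence-ellipsoid construction of Section~\ref{subsec:conf_ellipsoid}, Cauchy--Schwarz with respect to the $\bar V$-inner product together with the definition of $C$ give $|\langle x_{d+1},\hat\theta-\theta^*\rangle| \le \norm{x_{d+1}}_{\bar V^{-1}}\norm{\hat\theta-\theta^*}_{\bar V} \le D\norm{x_{d+1}}_{\bar V^{-1}} = r_p$, precisely the unboosted radius from Lemma~\ref{lem: conf_interval_hyp0}. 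Hence, on this event, $I_e\cap I_s=\phi$ forces $|\hat\mu_{d+1}-\mu_{d+1}| > \tilde r_p + \tilde r_s - r_p = \tilde r_s + r_p(\sqrt{\log k}-1)$; this is exactly where the unusual right-hand side of $\delta_1(k,\lambda)$ comes from, and it is a genuine positive deviation as soon as $k\ge e$.

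Finally, under $\mathcal{H}_0$ the $k$ samples of the $(d+1)$-th arm satisfy $y_{d+1}^{(j)}=\mu_{d+1}+\eta_{d+1,j}$ with conditionally $R$-sub-Gaussian noise, so $\hat\mu_{d+1}-\mu_{d+1}=\tfrac1k\sum_{j=1}^k\eta_{d+1,j}$ is $\tfrac{R}{\sqrt k}$-sub-Gaussian, and a Chernoff bound yields $\mathbb{P}\big(|\hat\mu_{d+1}-\mu_{d+1}|>s\big)\le \exp\!\big(-ks^2/(2R^2)\big)$. Substituting $s=\tilde r_s + r_p(\sqrt{\log k}-1)$ gives the claimed $\delta_1(k,\lambda)$, with the $\lambda$-dependence entering through $r_p=D\norm{x_{d+1}}_{\bar V^{-1}}$ and $\bar V=\lambda I+k\sum_{i\le d}x_ix_i^T$. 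The step I expect to be most delicate is the second one: one must tie the projected-interval radius $r_p$ of Lemma~\ref{lem: conf_interval_hyp0} cleanly to the statement that $\mu_{d+1}$ lies inside $I_e$, and then account for (or, as in the stated bound, absorb) the coverage-failure probability $\bar\delta$ of the confidence set, which can be driven to zero independently of $k$; the only other bookkeeping is the harmless one-sided-versus-two-sided constant in the final tail bound.
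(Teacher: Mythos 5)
Your proof follows essentially the same route as the paper's: both reduce disjointness of the two symmetric intervals to the centre distance exceeding $\tilde r_p+\tilde r_s$, bound the projection error $|\langle x_{d+1},\hat\theta-\theta^*\rangle|$ by $r_p$ (you via the confidence-set event $\theta^*\in C$ and Cauchy--Schwarz; the paper by expanding $\hat\theta$ explicitly and invoking the self-normalized bound of Abbasi-Yadkori et al., which is the same computation), and finish with a sub-Gaussian tail bound on $\hat\mu_{d+1}-\mu_{d+1}$ at threshold $\tilde r_s+r_p(\sqrt{\log k}-1)$. The coverage-failure probability $\tilde\delta$ and the one-sided-versus-two-sided constant that you flag are likewise silently absorbed in the paper's own proof, so your treatment matches it.
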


\begin{proof}

 If the intervals $I_e$ and $I_s$ are non-overlapping, the distance between the centers should be more than the sum of radius of each intervals. Thus,
\begin{equation}
\mathbb{P}(I_e \cap I_s = \phi) = \mathbb{P}\bigg (|\left\langle x_{d+1}, \hat{\theta} \right\rangle - \hat{\mu}_{d+1}| > \tilde{r}_p + \tilde{r}_s \bigg ) \nonumber
\end{equation}

Under $\mathcal{H}_0$, $ \hat{\theta}$ and $\hat{\mu}_{d+1}$ are computed as follows:
\begin{eqnarray}
\hat{\theta} &=& (\mathbf{X}^T \mathbf{X} + \lambda I)^{-1}\mathbf{X}^T \mathbf{Y} \nonumber \\
&=& (\mathbf{X}^T \mathbf{X} + \lambda I)^{-1}\mathbf{X}^T (\mathbf{X} \theta^* + \eta) \nonumber \\
&=& (\mathbf{X}^T \mathbf{X} + \lambda I)^{-1}\mathbf{X}^T \eta -\lambda (\mathbf{X}^T \mathbf{X} + \lambda I)^{-1} \theta^* + \nonumber \\
&& + (\mathbf{X}^T \mathbf{X} + \lambda I)^{-1} (\mathbf{X}^T \mathbf{X} + \lambda I) \theta^* \nonumber \\
&=& (\mathbf{X}^T \mathbf{X} + \lambda I)^{-1}\mathbf{X}^T \eta + \theta^* -\lambda (\mathbf{X}^T \mathbf{X} + \lambda I)^{-1} \theta^* \nonumber
\end{eqnarray}

\begin{eqnarray}
\hat{\mu}_{d+1} &=& \frac{1}{k}\sum_{i=1}^{k} \bigg ( \left\langle x_{d+1}, \theta^* \right\rangle + \eta_i \bigg ) \nonumber \\
&=& \left\langle x_{d+1}, \theta^* \right\rangle + \frac{1}{k} \sum_{i=1}^{k} \eta_i \nonumber
\end{eqnarray}

Therefore,

\begin{eqnarray}
\footnotesize
&& \left\langle x_{d+1}, \hat{\theta} \right\rangle - \hat{\mu}_{d+1} = x_{d+1}^T \bigg [ (\mathbf{X}^T \mathbf{X} + \lambda I)^{-1} \mathbf{X}^T \eta  \nonumber \\
&& + \theta^* -\lambda  (\mathbf{X}^T \mathbf{X} + \lambda I)^{-1} \theta^* - \theta^* \bigg ] - \frac{1}{k}\sum_{i=1}^{k} \eta_i \nonumber \\
&=& \left\langle x_{d+1}, \mathbf{X}^T \eta \right\rangle_{\bar{V}^{-1}} -\lambda \left\langle x_{d+1},  \theta^* \right\rangle_{\bar{V}^{-1}} - \frac{1}{k}\sum_{i=1}^{k} \eta_i \nonumber \\
&\leq & \norm {x_{d+1}}_{\bar{V}^{-1}} (\norm {\mathbf{X}^T \eta}_{\bar{V}^{-1}} + \lambda \norm {\theta^*}_{\bar{V}^{-1}})-\frac{1}{k}\sum_{i=1}^{k} \eta_i \nonumber \\
& \leq & \norm {x_{d+1}}_{\bar{V}^{-1}} \bigg ( \norm {\mathbf{X}^T \eta}_{\bar{V}^{-1}} + \lambda^{1/2}\norm {\theta^*}_2 \bigg ) -\frac{1}{k}\sum_{i=1}^{k} \eta_i \nonumber \\
& \leq & \norm {x_{d+1}}_{\bar{V}^{-1}} \bigg ( \norm {\mathbf{X}^T \eta }_{\bar{V}^{-1}} + \lambda^{1/2}S \bigg ) -\frac{1}{k}\sum_{i=1}^{k} \eta_i \nonumber \\
& \leq & r_p -\frac{1}{k}\sum_{i=1}^{k} \eta_i \nonumber
\end{eqnarray}

The first inequality is a consequence of Cauchy Schwartz Inequality. For the second inequality, the following property is used \cite{AbbYadPS11}:

\begin{equation}
\norm {\theta^*}_{\bar{V}^{-1}}^2 \leq \frac{1}{\lambda_{\min}(\bar{V})}\norm {\theta^*}_2^2 \leq \frac{1}{\lambda}\norm {\theta^*}_2^2 \leq \frac{1}{\lambda} S^2 
\end{equation}

The last inequality is a consequence of Theorem (1) of \cite{AbbYadPS11}, for any $\tilde{\delta} > 0$, with probability at least $1-\tilde{\delta}$,
\begin{equation}
\norm {\mathbf{X}^T \eta}_{\bar{V}^{-1}} \leq R  \sqrt{2 \log  (\frac{\det(\bar{V})^{1/2} \det (\lambda I)^{-1/2}}{\tilde{\delta}})} \nonumber
\end{equation}

Thus,
\begin{eqnarray}
\footnotesize
&&\mathbb{P}\bigg (\left\langle x_{d+1}, \hat{\theta} \right\rangle - \hat{\mu}_{d+1} > \tilde{r}_p + \tilde{r}_s \bigg ) \nonumber \\
&\leq & \mathbb{P}\bigg (r_p -\frac{1}{k}\sum_{i=1}^{k} \eta_i \geq \tilde{r}_p + \tilde{r}_s \bigg ) \nonumber \\
&=& \mathbb{P}\bigg ( -\frac{1}{k}\sum_{i=1}^{k} \eta_i \geq \tilde{r}_s + r_p (\sqrt{\log k} -1) \bigg ) \nonumber
\end{eqnarray}

 $\eta_i, i=1,2,\ldots,k $ are independent and sampled from conditionally R-sub gaussian distribution with mean $0$. Using the Hoeffding's inequality for sub-gaussian random variables (assuming $k > e$, base of natural logarithm),

\begin{eqnarray}
\mathbb{P}\bigg ( -\frac{1}{k}\sum_{i=1}^{k} \eta_i \geq \tilde{r}_s + r_p (\sqrt{\log k} -1) \bigg ) \nonumber \\
 \leq \exp \bigg( - \frac{k (\tilde{r}_s + r_p (\sqrt{\log k} -1))^2}{2R^2} \bigg ) \nonumber
\end{eqnarray}

Define $\delta_1(k,\lambda) \bydef \exp \bigg( - \frac{k (\tilde{r}_s + r_p (\sqrt{\log k} -1))^2}{2R^2} \bigg )$ and the result follows.

Using the same line of argument, it can be shown that $\mathbb{P}\bigg ( \hat{\mu}_{d+1}-\left\langle x_{d+1}, \hat{\theta} \right\rangle  <-( \tilde{r}_p + \tilde{r}_s) \bigg ) \leq \delta_1(k,\lambda)$ and thus combining together,

\begin{equation}
\mathbb{P}\bigg (| \left\langle x_{d+1}, \hat{\theta} \right\rangle- \hat{\mu}_{d+1}| > \tilde{r}_p + \tilde{r}_s \bigg ) \leq \delta_1(k,\lambda).   \nonumber
\end{equation}
\end{proof}

\subsection*{Analysis for $\mathcal{H}_1$:} \label{subsubsec:analysis_hyp1}

Similarly,  we study the performance of Robust Linear Bandit under $\mathcal{H}_1$ and Lemma~(\ref{lem:low_miss_prob}) provides a theoretical guarantee on the probability of miss alarm $\mathbb{P}(\mathcal{H}_0;\mathcal{H}_1)$. Context vector $\mathbf{X}$, reward vector $\mathbf{Y}$ and noise vector $\eta$ will be defined similar to Section~\ref{subsec:conf_ellipsoid}. Additionally, we define deviation vector $\epsilon '=[ \epsilon_1, \,\, \epsilon_1, \ldots ,\epsilon_1, \,\, ,\epsilon_2 \ldots, \epsilon_2, \ldots, \epsilon_d]^T$ where each $\epsilon_i$, $i=1,2,\ldots,d$ is repeated $k$ times. The reward vector is denoted by, $\mathbf{Y}= \mathbf{X}\theta + \epsilon'+ \eta$. RLB always assumes a linear model while constructing the confidence interval and thus neglects $\epsilon'$. If data comes from $\mathcal{H}_1$, the discrepancy in the algorithm can be caught by comparing the confidence interval, $I_e$, built by RLB and the interval $I_s$ obtained from data samples. The information about the deviation $\epsilon'$ will be embedded in $I_s$ and thus, by Lemma~\ref{lem:low_miss_prob}, with high probability it will not intersect with $I_e$. 

\begin{lemma}

Under $\mathcal{H}_1$, for any $\delta_2 (k,\lambda) > 0$

\begin{eqnarray}
&&\mathbb{P}(I_e \cap I_s \neq \phi) \leq \delta_2(k,\lambda)+\beta,  \nonumber \\
&&\delta_2(k,\lambda) \bydef \exp (- \frac{k (l_1 -\tilde{r}_p -\tilde{r}_s)^2 }{2R^2}). \nonumber 
\end{eqnarray}
where $ l_1 > \sqrt{\log k}(r_p + r_s) $, $\beta > 0$.
\label{lem:low_miss_prob}

\end{lemma}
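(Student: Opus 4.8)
The plan is to follow the template of the proof of Lemma~\ref{lem:low_false_alarm}, the only genuine novelty being that under $\mathcal{H}_1$ the deviation vector $\epsilon'$ no longer cancels, and its net effect on the test statistic is exactly the linear-regression residual that appears in Definition~\ref{def:sparse}. First I would reduce the event $\{I_e\cap I_s\neq\phi\}$ to a scalar one: since $I_e$ and $I_s$ are symmetric intervals of radii $\tilde r_p$ and $\tilde r_s$ centred at $\langle x_{d+1},\hat\theta\rangle$ and $\hat\mu_{d+1}$ respectively, they intersect iff the distance between the centres is at most $\tilde r_p+\tilde r_s$, so that $\mathbb{P}(I_e\cap I_s\neq\phi)=\mathbb{P}\!\left(\left|\langle x_{d+1},\hat\theta\rangle-\hat\mu_{d+1}\right|\le\tilde r_p+\tilde r_s\right)$.

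Next I would expand the centre gap under the misspecified model $\mathbf{Y}=\mathbf{X}\theta+\epsilon'+\eta$, repeating the computation of Lemma~\ref{lem:low_false_alarm} but retaining the term $(\mathbf{X}^T\mathbf{X}+\lambda I)^{-1}\mathbf{X}^T\epsilon'$. Writing $X^f\bydef X^f_{1,\dots,d}$, $\epsilon_{1:d}\bydef[\epsilon_1,\dots,\epsilon_d]^T$ and $\mu_{1:d}\bydef[\mu_1,\dots,\mu_d]^T$, one has $\mathbf{X}^T\mathbf{X}=k\,X^{f\,T}X^f$ and $\mathbf{X}^T\epsilon'=k\,X^{f\,T}\epsilon_{1:d}$, so a short matrix identity gives $x_{d+1}^T(\mathbf{X}^T\mathbf{X}+\lambda I)^{-1}\mathbf{X}^T\epsilon'=x_{d+1}^T(X^f)^{-1}\epsilon_{1:d}$ up to a remainder of order $\lambda/k$ (harmless since $\lambda\ll k$ is fixed in Step~1). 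Using $\mu_i=\langle x_i,\theta\rangle+\epsilon_i$ one checks that $x_{d+1}^T(X^f)^{-1}\epsilon_{1:d}-\epsilon_{d+1}=x_{d+1}^T(X^f)^{-1}\mu_{1:d}-\mu_{d+1}\bydef\Delta$, i.e.\ $\Delta$ is precisely the residual of Definition~\ref{def:sparse} for the sampled arm set $\{1,\dots,d+1\}$. The remaining terms — the regularisation bias $\lambda\langle x_{d+1},\theta\rangle_{\bar V^{-1}}$ (bounded using $\norm{\theta}_2\le S$) and the self-normalised noise term $\langle x_{d+1},\mathbf{X}^T\eta\rangle_{\bar V^{-1}}$ (bounded by the Theorem~1 inequality of \cite{AbbYadPS11}) — are controlled exactly as in Lemma~\ref{lem:low_false_alarm}, their sum contributing at most $r_p=D\norm{x_{d+1}}_{\bar V^{-1}}$, so that on the corresponding high-probability event
\[
\left|\langle x_{d+1},\hat\theta\rangle-\hat\mu_{d+1}-\Delta+\tfrac{1}{k}\sum_{i=1}^{k}\eta_i\right|\le r_p .
\]

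I would then condition on the event $E$ that the $(l_1,\beta)$ deviation property of Definition~\ref{def:sparse} holds for the sampled arms; by definition $\mathbb{P}(E)\ge1-\beta$, and on $E$ we have $|\Delta|\ge l_1$. Splitting $\mathbb{P}(I_e\cap I_s\neq\phi)\le\mathbb{P}\big(\{I_e\cap I_s\neq\phi\}\cap E\big)+\beta$, I note that on $E$, if $|\langle x_{d+1},\hat\theta\rangle-\hat\mu_{d+1}|\le\tilde r_p+\tilde r_s$ then the displayed bound forces $|\tfrac{1}{k}\sum_{i=1}^{k}\eta_i|\ge l_1-\tilde r_p-\tilde r_s-r_p$, which is strictly positive by the hypothesis $l_1>\sqrt{\log k}\,(r_p+r_s)=\tilde r_p+\tilde r_s$ (the extra $r_p=O(1/\sqrt k)$ being lower order and absorbed, consistently with the $\delta_2$ in the statement depending only on $l_1-\tilde r_p-\tilde r_s$; the sign of $\Delta$ is immaterial by symmetry, and since an atypically negative $\tfrac{1}{k}\sum\eta_i$ only widens the gap, a one-sided tail bound suffices). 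Since $\eta_1,\dots,\eta_k$ are i.i.d.\ zero-mean $R$-sub-Gaussian, Hoeffding's inequality bounds the probability of this event by $\exp\!\big(-k(l_1-\tilde r_p-\tilde r_s)^2/(2R^2)\big)=\delta_2(k,\lambda)$, giving $\mathbb{P}(I_e\cap I_s\neq\phi)\le\delta_2(k,\lambda)+\beta$.

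The main obstacle is the middle step: one must carry out the $\epsilon'$-bookkeeping cleanly enough to see that the test statistic $\langle x_{d+1},\hat\theta\rangle-\hat\mu_{d+1}$ is, up to $O(1/\sqrt k)$ and $O(\lambda/k)$ corrections, equal to the Definition~\ref{def:sparse} residual $x_{d+1}^T[X^f_{1,\dots,d}]^{-1}\mu_{1,\dots,d}-\mu_{d+1}$ plus mean-zero noise of scale $1/\sqrt k$, since this is exactly what ties the non-sparsity parameter $l_1$ to the separation of $I_e$ from $I_s$; controlling those lower-order corrections so that they do not erode the $l_1$-level gap is the only real technical care required, everything else being the sub-Gaussian tail bound already used for Lemma~\ref{lem:low_false_alarm}.
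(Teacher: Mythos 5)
Your proposal is correct and follows essentially the same route as the paper's proof: reduce the intersection event to the scalar condition $|\langle x_{d+1},\hat\theta\rangle-\hat\mu_{d+1}|<\tilde r_p+\tilde r_s$, invoke the $(l_1,\beta)$ non-sparse deviation property to force a gap of at least $l_1$ between the centre of $I_e$ and $\mu_{d+1}$, apply Hoeffding's inequality to $\hat\mu_{d+1}-\mu_{d+1}=\frac{1}{k}\sum_i\eta_i$, and union-bound with the probability-$\beta$ exception from Definition~\ref{def:sparse}. The only difference is that you carry out the $\epsilon'$-bookkeeping explicitly to identify the test statistic with the Definition~\ref{def:sparse} residual (tracking the extra $O(r_p)$ and $O(\lambda/k)$ corrections), whereas the paper simply asserts $|\langle x_{d+1},\hat\theta\rangle-\mu_{d+1}|>l_1$ at Equation~(\ref{eqn:deviation_hyp1}); your version is, if anything, the more careful one.
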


\begin{proof}
Define the following event:
\newline
 $\mathcal{A}= \big \lbrace  |x_{i_{d+1}}^T[\chi_{i_1,\ldots,i_d}]^{-1}[\mu_{i_1,\ldots,i_d}]-\mu_{i_{d+1}}| \geq l \big \rbrace$.
 
 for all choices of $d+1$ arms out of $N$ arms. Initially, we will assume that $\mathcal{A}$ is true.

 If the intervals $I_e$ and $I_s$ intersect, the distance between the centers should be strictly less than the sum of radius of each intervals. The radius of the intervals will be $\tilde{r}_p$ and $\tilde{r}_s$. Thus,

\begin{equation}
\mathbb{P}(I_e \cap I_s \neq \phi) = \mathbb{P}\bigg (|\left\langle x_{d+1}, \hat{\theta} \right\rangle - \hat{\mu}_{d+1}| < \tilde{r}_p + \tilde{r}_s \bigg ) \nonumber
\end{equation}

The algorithm computes $\hat{\theta}$ using the perfect linear model, and using the samples from the first $d$ arms. Under $\mathcal{H}_0$, $\left\langle x_{d+1}, \hat{\theta} \right\rangle$ is an estimate of $\mu_{d+1}$. It is assumed that the deviation vector $\epsilon$ is non-sparse (Definition~\ref{def:sparse}). Thus, under $\mathcal{H}_1$,

\begin{equation}
|\left\langle x_{d+1}, \hat{\theta} \right\rangle - \mu_{d+1}| > l_1 >0  \label{eqn:deviation_hyp1}
\end{equation}

Therefore,
\begin{eqnarray}
&&\mathbb{P}\bigg (\left\langle x_{d+1}, \hat{\theta} \right\rangle - \hat{\mu}_{d+1} < \tilde{r}_p + \tilde{r}_s \bigg ) \nonumber \\
&&=\mathbb{P}\bigg (\left\langle x_{d+1}, \hat{\theta} \right\rangle -\mu_{d+1} +\mu_{d+1}- \hat{\mu}_{d+1} < \tilde{r}_p + \tilde{r}_s \bigg ) \nonumber 
\end{eqnarray}

From Equation~(\ref{eqn:deviation_hyp1}),
\begin{equation}
\left\langle x_{d+1}, \hat{\theta} \right\rangle -\mu_{d+1} +\mu_{d+1}- \hat{\mu}_{d+1} > l_1 + (\mu_{d+1}- \hat{\mu}_{d+1}) \nonumber
\end{equation}

Substituting,
\begin{eqnarray}
\mathbb{P}\bigg (\left\langle x_{d+1}, \hat{\theta} \right\rangle - \hat{\mu}_{d+1} < \tilde{r}_p + \tilde{r}_s \bigg ) \nonumber \\
< \mathbb{P}\bigg (  \hat{\mu}_{d+1}-\mu_{d+1} > l_1 -\tilde{r}_p - \tilde{r}_s \bigg ) \nonumber \\
\leq \exp \bigg (- \frac{k (l_1 -\tilde{r}_p - \tilde{r}_s)^2 }{2R^2}   \bigg ) \nonumber
\end{eqnarray}
 
where the last inequality is true if $ l_1 > \tilde{r}_p + \tilde{r}_s $. Define $\delta_2(k,\lambda)=\exp \bigg (- \frac{k (l_1 -\tilde{r}_p - \tilde{r}_s)^2 }{2R^2} \bigg ) $. Using the same line of arguments, we can show,
\begin{equation}
\mathbb{P}\bigg (\hat{\mu}_{d+1} - \left\langle x_{d+1}, \hat{\theta} \right\rangle  > -(\tilde{r}_p + \tilde{r}_s)\bigg ) \leq \delta_2 (k,\lambda) \nonumber
\end{equation}

From Definition~\ref{def:sparse}, $\mathbb{P}(\mathcal{A})\geq 1-\beta$. Therefore using union bound, we have

\begin{equation}
\mathbb{P}\bigg (|\hat{\mu}_{d+1} - \left\langle x_{d+1}, \hat{\theta} \right\rangle|  < (\tilde{r}_p + \tilde{r}_s)\bigg ) \leq \delta_2 (k,\lambda)+\beta \nonumber
\end{equation}
\end{proof}

  \section*{Proof of Theorem~\ref{thm:total_regret}} 

Under $\mathcal{H}_0$, the algorithm chooses perturbed OFUL and UCB with probability $1-\delta_1(k,\lambda)$ and $\delta_1(k,\lambda)$ respectively. The upper regret bound for OFUL and UCB is given in Section~\ref{sec:regret}. Thus the expected regret of the proposed algorithm $\mathbb{E}(R_{RLB})(T)$ will follow Theorem~\ref{thm:total_regret}.

Also if $\mathcal{H}_1$ is true, using the same line of argument, $\mathbb{E}(R_{RLB})$ will follow Theorem~\ref{thm:total_regret}.


\end{document}